\newtheorem{theorem}{Theorem}[section]
\newtheorem{lemma}[theorem]{Lemma}
\newtheorem{assumption}[theorem]{Assumption}
\def\Method{TRACER}
\title{Uncertainty-based Offline Variational Bayesian Reinforcement Learning for Robustness under\\ Diverse Data Corruptions}
\author{%
  Rui Yang$^{1,2}$, Jie Wang$^{1,2}$\thanks{Corresponding author. Email: jiewangx@ustc.edu.cn.}\,\,, Guoping Wu$^1$, Bin Li$^1$ \\
  $^1$University of Science and Technology of China \\
  $^2$MoE Key Laboratory of Brain-inspired Intelligent Perception and Cognition \\
  \texttt{\{yr0013, guoping\}@mail.ustc.edu.cn} \\
  \texttt{\{jiewangx, binli\}@ustc.edu.cn} \\
  % Affiliation \\
  % Address \\
  % \texttt{email} \\
  % \AND
  % Coauthor \\
  % Affiliation \\
  % Address \\
  % \texttt{email} \\
  % \And
  % Coauthor \\
  % Affiliation \\
  % Address \\
  % \texttt{email} \\
  % \And
  % Coauthor \\
  % Affiliation \\
  % Address \\
  % \texttt{email} \\
}
\begin{document}

\maketitle

\begin{abstract}
  Real-world offline datasets are often subject to data corruptions (such as noise or adversarial attacks) due to sensor failures or malicious attacks.
  Despite advances in robust offline reinforcement learning (RL), existing methods struggle to learn robust agents under high uncertainty caused by the diverse corrupted data (i.e., corrupted states, actions, rewards, and dynamics), leading to performance degradation in clean environments.
  To tackle this problem, we propose a novel robus\textbf{t} va\textbf{r}iational B\textbf{a}yesian inferen\textbf{c}e for offlin\textbf{e} \textbf{R}L (\Method{}). It introduces Bayesian inference for the first time to capture the uncertainty via offline data for robustness against all types of data corruptions.
  Specifically, \Method{} first models all corruptions as the uncertainty in the action-value function.
  Then, to capture such uncertainty, it uses all offline data as the observations to approximate the posterior distribution of the action-value function under a Bayesian inference framework.
  An appealing feature of \Method{} is that it can distinguish corrupted data from clean data using an entropy-based uncertainty measure, since corrupted data often induces higher uncertainty and entropy.
  Based on the aforementioned measure, \Method{} can regulate the loss associated with corrupted data to reduce its influence, thereby enhancing robustness and performance in clean environments.
  Experiments demonstrate that \Method{} significantly outperforms several state-of-the-art approaches across both individual and simultaneous data corruptions.
\end{abstract}

\section{Introduction}
% 1. offline RL的背景介绍
Offline reinforcement learning (RL) aims to learn an effective policy from a fixed dataset without direct interaction with the environment~\cite{bcq,offline-survey}.
This paradigm has recently attracted much attention in scenarios where real-time data collection is expensive, risky, or impractical, such as in healthcare~\cite{healthcare}, autonomous driving~\cite{auto-driving}, and industrial automation~\cite{industrial-auto}.
Due to the restriction of the dataset, offline RL confronts the challenge of distribution shift between the policy represented in the offline dataset and the policy being learned, which often leads to the overestimation for out-of-distribution (OOD) actions~\cite{bcq,ood-action,cql}.
To address this challenge, one of the promising approaches introduce uncertainty estimation techniques, such as using the ensemble of action-value functions or Bayesian inference to measure the uncertainty of the dynamics model~\cite{mopo,morel,bayes-model-based,model-based} or the action-value function~\cite{q-uncertainty,q-uncertainty1,q-uncertainty2,bayes-q-uncertainty} regarding the rewards and transition dynamics. Therefore, they can constrain the learned policy to remain close to the policy represented in the dataset, guiding the policy to be robust against OOD actions.
% some approaches focus on the conservative techniques, such as constraining the learned policy to remain close to the policy represented in the dataset~\cite{conservative-pi,conservative-pi1,conservative-pi2}, or regularizing the learned value functions to assign low values to OOD actions~\cite{cql,q-regular,q-regular1}.

Nevertheless, in the real world, the dataset collected by sensors or humans may be subject to extensive and diverse corruptions~\cite{corrupt-robust-adversarial,corrupt-robust-general-func,riql}, e.g., random noise from sensor failures or adversarial attacks during RLHF data collection.
Offline RL methods often assume that the dataset is clean and representative of the environment.
Thus, when the data is corrupted, the methods experience performance degradation in the clean environment, as they often constrain policies close to the corrupted data distribution.
% Consequently, they struggle to perform effectively when these corruptions introduce uncertainties into the dataset.
% These corruptions introduce uncertainties into the dataset, thus significantly degrading the performance of existing offline RL methods.

% Existing offline RL approaches for robustness introduce xxxx to address the problem of data corruptions.
Despite advances in robust offline RL~\cite{offline-survey}, these approaches struggle to address the challenges posed by diverse data corruptions~\cite{riql}.
Specifically, many previous methods on robust offline RL aim to enhance the testing-time robustness, learning from clean datasets and defending against attacks during testing~\cite{robust-offline-rl1,robust-offline-rl2,robust-offline-rl3}.
However, they cannot exhibit robust performance using offline dataset with perturbations while evaluating the agent in a clean environment.
Some related works for data corruptions (also known as \textit{corruption-robust offline RL} methods) introduce statistical robustness and stability certification to improve performance, but they primarily focus on enhancing robustness against adversarial attacks~\cite{corrupt-robust-adversarial,corrupt-robust-policy,corrupt-robust-adversarial1}.
Other approaches focus on the robustness against both random noise and adversarial attacks, but they often aim to address only corruptions in states, rewards, or transition dynamics~\cite{corrupt-robust-diffusion,corrupt-robust-general-func}.
Based on these methods, recent work~\cite{riql} extends the data corruptions to all four elements in the dataset, including states, actions, rewards, and dynamics. This work demonstrates the superiority of the supervised policy learning scheme~\cite{weight-pi,iql} for the data corruption of each element in the dataset. %, and it further employs an ensemble of action-value functions with a Huber regression~\cite{kotz2012breakthroughs} to learn robust value functions.
However, as it does not take into account the uncertainty in decision-making caused by the simultaneous presence of diverse corrupted data, this work still encounters difficulties in learning robust agents, limiting its applications in real-world scenarios.
% However, as it does not take into account the uncertainty in the decision making caused by diverse corrupted data, this work still encounters difficulties in accurately approximating the action-value function, limiting its robustness in real-world applications.

\begin{figure}
  \centering
  % \fbox{\rule[-.5cm]{0cm}{4cm} \rule[-.5cm]{4cm}{0cm}}
  \includegraphics[width=1.0\linewidth]{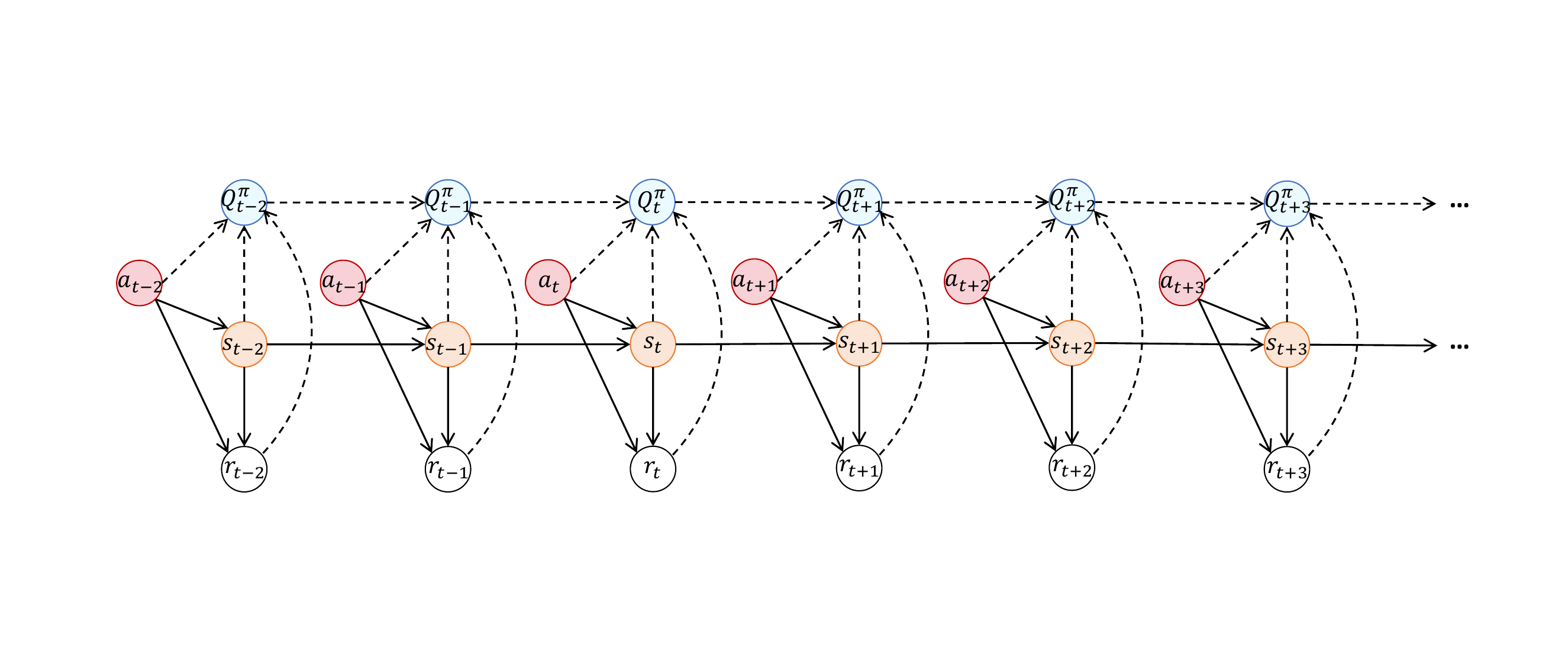}

  \caption{Graphical model of decision-making process. Nodes connected by solid lines denote data points in the offline dataset, while the Q values (i.e., action values) connected by dashed lines are not part of the dataset. These Q values are often objectives that offline algorithms aim to approximate.}
  % \vspace{-0.51cm}
  \label{fig-casual}
\end{figure}

In this paper, we propose to use offline data as the observations, thus leveraging their correlations to capture the uncertainty induced by all corrupted data.
% In this paper, we propose to leverage the data correlated  to simultaneously capture and reduce the uncertainty of the action-value function against diverse data corruptions.
Considering that (1) diverse corruptions may introduce uncertainties into all elements in the offline dataset, and (2) each element is correlated with the action values (see dashed lines in Figure~\ref{fig-casual}), there is high uncertainty in approximating the action-value function by using various corrupted data.
To address this high uncertainty, we propose to leverage all elements in the dataset as observations, based on the graphical model in Figure~\ref{fig-casual}.
By using the high correlations between these observations and the action values~\cite{van2021bayesian}, we can accurately identify the uncertainty of the action-value function.

% As all elements in the dataset are causally correlated with each other (see the solid lines in Figure~\ref{fig-casual}), we propose to employ the correlations among elements in the dataset to capture and reduce this high uncertainty within the action-value function.

Motivated by this idea, we propose a robus\textbf{t} va\textbf{r}iational B\textbf{a}yesian inferen\textbf{c}e for offlin\textbf{e} \textbf{R}L (\Method{}) to capture the uncertainty via offline data against all types of data corruptions. 
Specifically, \Method{} first models all data corruptions as uncertainty in the action-value function.
Then, to capture such uncertainty, it introduces variational Bayesian inference~\cite{bayes-survey}, which uses all offline data as observations to approximate the posterior distribution of the action-value function.
% Then, it introduces variational Bayesian inference~\cite{bayes-survey} to approximate the posterior distribution of the action-value function. Thus, it can capture the uncertainty by leveraging the high correlations between the various observed data and the action values.
% Moreover, by introducing a Huber regression~\cite{huber} into the posterior estimate~\cite{generalized-bayes}, \Method{} successfully improves the robustness.
Moreover, the corrupted observed data often induce higher uncertainty than clean data, resulting in higher entropy in the distribution of action-value function. 
% Therefore, we propose to use exponential entropy---a metric of extent of a distribution---as an uncertainty measure to weight the loss in approximating the action-value distribution.
Thus, \Method{} can use the entropy as an uncertainty measure to effectively distinguish corrupted data from clean data.
Based on the entropy-based uncertainty measure, it can regulate the loss associated with corrupted data in approximating the action-value distribution.
This approach effectively reduces the influence of corrupted samples, enhancing robustness and performance in clean environments.

This study introduces Bayesian inference into offline RL for data corruptions. It significantly captures the uncertainty caused by diverse corrupted data, thereby improving both robustness and performance in offline RL.
Moreover, it is important to note that, unlike traditional Bayesian online and offline RL methods that only model uncertainty from rewards and dynamics~\citep{bayes-rl,bayes-rl1,bayes-rl2,bayes-rl3,bayes-offline-rl,bayes-offline-rl1,bayes-offline-rl2}, our approach identifies the uncertainty of the action-value function regarding states, actions, rewards, and dynamics under data corruptions.
We summarize our contributions as follows.
\begin{itemize}
    % \item We are the first to introduce the Bayesian inference into offline RL for data corruptions.
    % By leverage the high correlation between the offline data and the action values, we can capture the uncertainty of the action-value function caused by diverse corrupted data. 
    \item To the best of our knowledge, this study introduces Bayesian inference into corruption-robust offline RL for the first time. By leveraging all offline data as observations, it can capture uncertainty in the action-value function caused by diverse corrupted data.

    \item By introducing an entropy-based uncertainty measure, \Method{} can distinguish corrupted from clean data, thereby regulating the loss associated with corrupted samples to reduce its influence for robustness.
    
    \item Experiment results show that \Method{} significantly outperforms several state-of-the-art offline RL methods across a range of both individual and simultaneous data corruptions.
\end{itemize}

\section{Preliminaries}
\label{prelim}
% 1. BRL；
% 2. Distributional RL；
% 3. VI；
% 4. OfflineRL and Corrupted Data
\paragraph{Bayesian RL.}
We consider a Markov decision process (MDP), denoted by a tuple $\mathcal{M} = (\mathcal{S}, \mathcal{A}, \mathcal{R}, P, P_0, \gamma)$, where $\mathcal{S}$ is the state space, $\mathcal{A}$ is the action space, $\mathcal{R}$ is the reward space, $P(\cdot|s, a) \in \mathcal{P(S)}$ is the transition probability distribution over next states conditioned on a state-action pair $(s, a)$, $P_0(\cdot) \in \mathcal{P(S)}$ is the probability distribution of initial states, and $\gamma \in [0, 1)$ is the discount factor.
Note that $\mathcal{P(S)}$ and $\mathcal{P(A)}$ denote the sets of probability distributions on subsets of $\mathcal{S}$ and $\mathcal{A}$, respectively.
For simplicity, throughout the paper, we use uppercase letters to refer to random variables and lowercase letters to denote values taken by the random variables. 
Specifically, $R(s, a)$ denotes the random variable of one-step reward following the distribution $\rho(r | s, a)$, and $r(s, a)$ represents a value of this random variable.
% Moreover, $\hat{r}(s, a) = \mathbb{E}_{r\sim \rho(\cdot|s, a)} [r]$ denotes the expected value of $R(s, a)$.
We assume that the random variable of one-step rewards and their expectations are bounded by $R_{\max}$ and $r_{\max}$ for any $(s, a) \in \mathcal{S \times A}$, respectively.

Our goal is to learn a policy that maximizes the expected discounted cumulative return:
$$
    \pi^* = \underset{\pi \in \mathcal{P(A)}}{\arg \max }\,\, \mathbb{E}_{s_0 \sim P_0, a_t \sim \pi(\cdot|s_t), R \sim \rho(\cdot|s_t,a_t), s_{t+1} \sim P(\cdot|s_t, a_t)}\left[ \sum_{t=0}^{\infty} \gamma^t R(s_t, a_t) \right].
$$
% Our objective is to learn a policy $\pi \in \mathcal{P(A)}$ that maximizes the expected discounted cumulative return 
% $\mathbb{E}_{s_0 \sim P_0, a_t \sim \pi(\cdot|s_t), R \sim \rho(\cdot|s_t,a_t), s_{t+1} \sim P(\cdot|s_t, a_t)}\left[ \sum_{t=0}^{\infty} \gamma^t R(s_t, a_t) \right].$
% Based on the expected cumulative rewards, we can define the value function $V^\pi$, the action-value function $Q^\pi$, and the action-value distribution $D^\pi$~\cite{bayes-rl-survey} as follows.
% \begin{align}
%     V^\pi(s) &= \mathbb{E}_{\pi,\rho,P} \left[ \sum_{t=0}^{\infty} \gamma^t R(s_t, a_t) | s_0 = s \right], \nonumber \\
%     Q^\pi(s, a) &= \mathbb{E}_{R\sim\rho(\cdot|s,a), s'\sim P(\cdot|s,a)} \left[ R(s, a) + \gamma V^\pi(s') \right], \nonumber \\
%     D^\pi(s, a) &= \sum_{t=0}^{\infty} \gamma^t R(s_t, a_t) | s_0 = s, a_0 = a,
% \end{align}
Based on the return, we can define the value function as $V^\pi(s) = \mathbb{E}_{\pi,\rho,P} \left[ \sum_{t=0}^{\infty} \gamma^t R(s_t, a_t) | s_0 = s \right]$, the action-value function as $Q^\pi(s, a) = \mathbb{E}_{R\sim\rho(\cdot|s,a), s'\sim P(\cdot|s,a)} \left[ R(s, a) + \gamma V^\pi(s') \right]$, and the \textit{action-value distribution}~\cite{bayes-rl-survey} as
\begin{equation}
    D^\pi(s, a) = \sum_{t=0}^{\infty} \gamma^t R(s_t, a_t | s_0 = s, a_0 = a),\,\,\,\, \text{with}\,\, s_{t+1} \sim P(\cdot|s_t, a_t), a_{t+1} \sim \pi(\cdot|s_{t+1}).
\end{equation}
% $D^\pi(s, a) = \sum_{t=0}^{\infty} \gamma^t R(s_t, a_t) | s_0 = s, a_0 = a$, with $s_{t+1} \sim P(\cdot|s_t, a_t)$ and $a_{t+1} \sim \pi(\cdot|s_{t+1})$.
Note that $V^\pi(s) = \mathbb{E}_{a\sim\pi} \left[ Q^\pi (s, a) \right] = \mathbb{E}_{a\sim \pi, \rho} \left[ D^\pi (s, a) \right]$.

\subparagraph{Variational Inference.}
Variational inference is a powerful method for approximating complex posterior distributions, which is effective for RL to handle the parameter uncertainty and deal with modelling errors~\cite{bayes-rl-survey}.
% In this paper, we focus on diverse data corruptions under the offline setting. We consider to introduce the VI technique to model the data corruptions as uncertainty of 
Given an observation $X$ and latent variables $Z$, Bayesian inference aims to compute the posterior distribution $p(Z|X)$. Direct computation of this posterior is often intractable due to the high-dimensional integrals involved.
To approximate the true posterior, Bayesian inference introduces a parameterized distribution $q(Z;\phi)$ and minimizes the Kullback-Leibler (KL) divergence $\mathcal{D}_{\text{KL}}(q(Z;\phi)\|p(Z|X))$.
Note that minimizing the KL divergence is equivalent to maximizing the evidence lower bound (ELBO)~\cite{vae-tutorial,vae-intro}: $\operatorname{ELBO}(\boldsymbol{\phi})=\mathbb{E}_{q(\mathbf{Z} ; \phi)}[\log p(\mathbf{X}, \mathbf{Z})-\log q(\mathbf{Z} ; \boldsymbol{\phi})].$
% \begin{equation}
%     \operatorname{ELBO}(\boldsymbol{\phi})=\mathbb{E}_{q(\mathbf{Z} ; \phi)}[\log p(\mathbf{X}, \mathbf{Z})-\log q(\mathbf{Z} ; \boldsymbol{\phi})].
% \end{equation}

% ~\cite{bayes-rl-survey,random-reward}

\paragraph{Offline RL under Diverse Data Corruptions.}
% In this paper, our aim is to learn a near-optimal policy using a static offline dataset. However, using traditional off-policy algorithms may suffer from the distribution shift between the policy represented in the offline dataset and the policy being learned~\cite{bcq}.
% To tackle this problem, IQL~\cite{iql} applies the expectile regression to directly learn the the value distribution with respect to the policy represented in the offline dataset.
In the real world, the data collected by sensors or humans may be subject to diverse corruption due to sensor failures or malicious attacks.
Let $b$ and $\mathcal{B}$ denotes the uncorrupted and corrupted dataset with samples $\{ (s_t^i, a_t^i, r_t^i, s_{t+1}^i) \}_{i=1}^N$, respectively. Each data in $\mathcal{B}$ may be corrupted.
We assume that an uncorrupted state follows a state distribution $p_{b}(\cdot)$, a corrupted state follows $p_{\mathcal{B}}(\cdot)$, an uncorrupted action follows a behavior policy $\pi_{b}(\cdot|s_t^i)$, a corrupted action is sampled from $\pi_{\mathcal{B}}(\cdot|s_t^i)$, a corrupted reward is sampled from $\rho_{\mathcal{B}}(\cdot|s_t^i, a_t^i)$, and a corrupted next state is drawn from $P_{\mathcal{B}}(\cdot|s_t^i, a_t^i)$.
We also denote the uncorrupted and corrupted empirical state-action distributions as $p_{b}(s_t^i, a_t^i)$ and $p_{\mathcal{B}}(s_t^i, a_t^i)$, respectively.
Moreover, we introduce the notations~\cite{riql,c51} as follows.
\begin{align}
    \tilde{\mathcal{T}} Q(s, a) =\,\,& \tilde{r}(s, a) + \mathbb{E}_{s'\sim P_{\mathcal{B}}(\cdot|s, a)} \left[ V(s') \right],\quad  \tilde{r}(s, a) = \mathbb{E}_{r\sim \rho_{\mathcal{B}}(\cdot|s, a)}[r], \\
    \tilde{\mathcal{T}} D(s, a)& : \stackrel{D}{=} R(s, a)+\gamma D\left(s^{\prime}, a^{\prime}\right),\quad s'\sim P_{\mathcal{B}}(\cdot|s, a),\,\, a' \sim \pi_{\mathcal{B}}(\cdot|s), \label{eq-d}
\end{align}
for any $(s, a) \in \mathcal{S\times A}$ and $Q: \mathcal{S\times A} \mapsto\left[0, r_{\max } /(1-\gamma)\right]$, where $X: \stackrel{D}{=} Y$ denotes equality of probability laws, that is the random variable $X$ is distributed according to the same law as $Y$.

To address the diverse data corruptions, based on IQL~\cite{iql}, RIQL~\cite{riql} introduces quantile estimators with an ensemble of action-value functions $\{ Q_{\theta_i}(s, a) \}_{i=1}^K$ and employs a Huber regression~\cite{kotz2012breakthroughs}:
\begin{align}
    \mathcal{L}_Q\left(\theta_i\right)&=\mathbb{E}_{\left(s, a, r, s^{\prime}\right) \sim \mathcal{B}}\left[l_H^\kappa\left(r+\gamma V_\psi\left(s^{\prime}\right)-Q_{\theta_i}(s, a)\right)\right], \,\, l_H^\kappa(x)= \begin{cases}\frac{1}{2 \kappa} x^2, & \text { if }|x| \leq \kappa \\ |x|-\frac{1}{2} \kappa, & \text { if }|x|>\kappa\end{cases}, \label{eq-q} \\
    \mathcal{L}_V(\psi)&=\mathbb{E}_{(s, a) \sim \mathcal{B}}\left[\mathcal{L}_2^\nu\left(Q_\alpha(s, a)-V_\psi(s)\right)\right], \quad \mathcal{L}_2^\nu(x)=|\nu-\mathbb{I}(x<0)| \cdot x^2. \label{eq-value1}
\end{align}
Note that $l^\kappa_H$ is the Huber loss, and $Q_{\alpha}$ is the $\alpha$-quantile value among $\{ Q_{\theta_i}(s, a) \}_{i=1}^K$. RIQL then follows IQL~\cite{iql} to learn the policy using weighted imitation learning with a hyperparameter $\beta$:
\begin{equation}\label{eq-policy}
    \mathcal{L}_\pi(\phi)=\mathbb{E}_{(s, a) \sim \mathcal{B}}\left[\exp (\beta \cdot A_{\alpha}(s, a)) \log \pi_\phi(a | s)\right], \quad A_{\alpha}(s, a)=Q_{\alpha}(s, a)-V_\psi(s) .
\end{equation}

\section{Algorithm}
\label{algo}

We first introduce the Bayesian inference for capturing the uncertainty caused by diverse corrupted data in Section~\ref{algo-bayes}.
Then, we provide our algorithm \Method{} with the entropy-based uncertainty measure in Section~\ref{algo-impl}.
Moreover, we provide the theoretical analysis for robustness, the architecture, and the detailed implementation of \Method{} in Appendices~\ref{app-proof}, \ref{app-arch}, and \ref{app-imple}, respectively.

\subsection{Variational Inference for Uncertainty induced by Corrupted Data}
\label{algo-bayes}

We focus on corruption-robust offline RL to learn an agent under diverse data corruptions, i.e., random or adversarial attacks on four elements of the dataset.
We propose to use all elements as observations, leveraging the data correlations to simultaneously address the uncertainty.
By introducing Bayesian inference framework, our aim is to approximate the posterior distribution of the action-value function.

At the beginning, based on the relationships between the action values and the four elements (i.e., states, actions, rewards, next states) in the offline dataset as shown in Figure~\ref{fig-casual}, we define $D_\theta = D_\theta(S,A,R) \sim p_\theta(\cdot | S,A,R)$, parameterized by $\theta$. 
Building on the action-value distribution, we can explore how to estimate the posterior of $D_\theta$ using the elements available in the offline data.

Firstly, we start from the actions $\{a_t^i\}_{i=1}^N$ following the corrupted distribution $\pi_{\mathcal{B}}$ and use them as observations to approximate the posterior of the action-value distribution under a variational inference.
As the actions are correlated with the action values and all other elements in the dataset, the likelihood is $p_{\varphi_a}(A|D,S,R,S')$, parameterized by $\varphi_a$.
Then, under the variational inference framework, we maximize the posterior and derive to minimize the loss function based on ELBO:
\begin{equation}
    \mathcal{L}_{D|A}(\theta, \varphi_a) = \mathbb{E}_{\mathcal{B},p_\theta} \left[ \mathcal{D}_{\text{KL}}\big( p_{\varphi_a} (A|D_\theta,S,R,S')\, \| \,\pi_{\mathcal{B}}(A | S) \big) - \mathbb{E}_{A \sim p_{\varphi_a}} \left[ \log p_\theta (D_\theta | S, A, R) \right] \right],
    \label{eq-d_a}
\end{equation}
where $S$, $R$, and $S'$ follow the offline data distributions $p_\mathcal{B}$, $\rho_\mathcal{B}$, and $P_\mathcal{B}$, respectively.

Secondly, we apply the rewards $\{r_t^i\}_{i=1}^N$ drawn from the corrupted reward distribution $\rho_{\mathcal{B}}$ as the observations. 
Considering that the rewards are related to the states, actions, and action values, we model the likelihood as $p_{\varphi_r}(R|D, S, A)$, parameterized by $\varphi_r$. Therefore, we can derive a loss function by following Equation~\eqref{eq-d_a}:
\begin{equation}
    \mathcal{L}_{D|R}(\theta, \varphi_r) = \mathbb{E}_{\mathcal{B},p_\theta} \left[ \mathcal{D}_{\text{KL}}\big( p_{\varphi_r} (R|D_\theta,S,A)\, \| \,\rho_{\mathcal{B}}(R | S, A) \big) - \mathbb{E}_{R \sim p_{\varphi_r}} \left[ \log p_\theta (D_\theta | S, A, R) \right] \right],
    \label{eq-d_r}
\end{equation}
where $S$ and $A$ follow the offline data distributions $p_\mathcal{B}$ and $\pi_\mathcal{B}$, respectively.

Finally, we employ the state $\{ s_t^i \}_{i=1}^N$ in the offline dataset following the corrupted distribution $p_{\mathcal{B}}$ as the observations.
Due to the relation of the states, we can model the likelihood as $p_{\varphi_s}(S|D,A,R)$, parameterized by $\varphi_r$.
We then have the loss function:
\begin{equation}
    \mathcal{L}_{D|S}(\theta, \varphi_s) = \mathbb{E}_{\mathcal{B},p_\theta} \left[ \mathcal{D}_{\text{KL}}\big( p_{\varphi_s} (S|D_\theta,A,R)\, \| \,p_{\mathcal{B}}(S) \big) - \mathbb{E}_{S \sim p_{\varphi_s}} \left[ \log p_\theta (D_\theta | S, A, R) \right] \right],
    \label{eq-d_s}
\end{equation}
where $A$ and $R$ follow the offline data distributions $\pi_\mathcal{B}$ and $\rho_\mathcal{B}$, respectively.
We present the detailed derivation process in Appendix~\ref{app-loss}.

The goal of first terms in Equations~\eqref{eq-d_a}, \eqref{eq-d_r}, and \eqref{eq-d_s} is to estimate $\pi_{\mathcal{B}}(A|S)$, $\rho_{\mathcal{B}}(R|S,A)$, and $p_{\mathcal{B}}(S)$ using $p_{\varphi_a}(A|D_\theta,S,R,S')$, $p_{\varphi_r}(R|D_\theta,S,A)$, and $p_{\varphi_s}(S|D_\theta,A,R)$, respectively.
As we do not have the explicit expression of distributions $\pi_{\mathcal{B}}$, $\rho_\mathcal{B}$, and $p_\mathcal{B}$, we cannot directly compute the KL divergence in these first terms.
% To address this issue, based on the generalized Bayesian inference~\cite{generalized-bayes}, we can replace $\mathcal{D}_{\text{KL}}( q || p )$ into $\mathcal{D}_{\text{KL}}( p || q )$. %(see Appendix~\ref{app-loss} for details). 
To address this issue, based on the generalized Bayesian inference~\cite{generalized-bayes}, we can exchange two distributions in the KL divergence.
% Then, we model $p_{\varphi_a}$, $p_{\varphi_r}$, and $p_{\varphi_s}$ as high-dimensional Gaussian distributions.
Then, we model all the aformentioned distributions as Gaussian distributions, and use the mean $\mu_{\varphi}$ and standard deviation $\Sigma_{\varphi}$ to represent the corresponding $p_{\varphi}$.
For implementation, we directly employ MLPs to output each $(\mu_{\varphi},\Sigma_{\varphi})$ using the corresponding conditions of $p_{\varphi}$. Then, based on the KL divergence between two Gaussian distributions, we can derive the loss function as follows.

\begin{align}
    \mathcal{L}_{\text{first}}(\theta,\varphi_s,\varphi_a,\varphi_r) = \frac{1}{2} & \mathbb{E}_{(s,a,r) \sim \mathcal{B}, D_\theta \sim p_\theta} \left[
        (\mu_{\varphi_a} - a)^T \Sigma_{\varphi_a}^{-1} (\mu_{\varphi_a} - a) + (\mu_{\varphi_r} - r)^T \Sigma_{\varphi_r}^{-1} (\mu_{\varphi_r} - r)
    \right. \nonumber \\
    & \quad \left.
         + (\mu_{\varphi_s} - s)^T \Sigma_{\varphi_s}^{-1} (\mu_{\varphi_s} - s) + \log |\Sigma_{\varphi_a}| \cdot |\Sigma_{\varphi_r}| \cdot |\Sigma_{\varphi_s}|
    \right],
    \label{eq-first}
\end{align}

Moreover, the goal of second terms in Equations~\eqref{eq-d_a}, \eqref{eq-d_r}, and \eqref{eq-d_s} is to maximize the likelihoods of $D_\theta$ given samples $\hat{s} \sim p_{\varphi_s}$, $\hat{a} \sim p_{\varphi_a}$, or $\hat{r} \sim p_{\varphi_r}$. Thus, with $(s,a,r)\sim \mathcal{B}$, we propose minimizing the distance between $D_\theta (\hat{s},a,r)$ and $D(s,a,r)$, $D_\theta (s,\hat{a},r)$ and $D(s,a,r)$, and $D_\theta (s,a,\hat{r})$ and $D(s,a,r)$, where $\hat{s} \sim p_{\varphi_s}$, $\hat{a} \sim p_{\varphi_a}$, and $\hat{r} \sim p_{\varphi_r}$.
Then, based on \cite{generalized-bayes}, we can derive the following loss with any metric $\ell$ to maximize the log probabilities:
\begin{align}
    \mathcal{L}_{\text{second}}(\theta,\varphi_s,\varphi_a,\varphi_r) = & \,\,\mathbb{E}_{(s,a,r) \sim \mathcal{B}, \hat{s} \sim p_{\varphi_s},\hat{a} \sim p_{\varphi_a},\hat{r} \sim p_{\varphi_r}, D \sim p } \left[
        \ell \big(D(s,a,r), D_\theta(s,\hat{a},r)\big) \right. \nonumber \\
    & \quad + \left.
         \ell \big(D(s,a,r), D_\theta(s,a,\hat{r})\big) + \ell \big(D(s,a,r), D_\theta(\hat{s},a,r)\big)
    \right].
    \label{eq-second}
\end{align}

\subsection{Corruption-Robust Algorithm with the Entropy-based Uncertainty Measure}
\label{algo-impl}

We focus on developing tractable loss functions for implementation in this subsection.

\textit{\textbf{Learning the Action-Value Distribution based on Temporal Difference (TD).}}
Based on \cite{qr-dqn,iqn}, we introduce the quantile regression~\cite{quantile} to approximate the action-value distribution in the offline dataset $\mathcal{B}$ using an ensemble model $\{ D_{\theta_i} \}_{i=1}^K$. We use Equation~\eqref{eq-q} to derive the loss as:
\begin{align}
    \mathcal{L}_D\left(\theta_i\right) = \mathbb{E}_{(s,a,r,s') \sim \mathcal{B}}\left[ \frac{1}{N^{\prime}} \sum_{n=1}^N \sum_{m=1}^{N^{\prime}} \rho_{\tau}^\kappa\left(\delta_{\theta_i}^{\tau_n, \tau_m^{\prime}}\right) \right], \quad
    \delta_{\theta_i}^{\tau, \tau^{\prime}} = r+\gamma Z^{\tau^{\prime}}\left(s'\right) - D_{\theta_i}^\tau\left(s, a,r\right),
    \label{eq-dtd}
\end{align}
where $\rho_\tau^\kappa\left(\delta\right)=\left|\tau-\mathbb{I}\left\{\delta<0\right\}\right| \cdot l^\kappa_H\left(\delta\right)$ with the threshold $\kappa$, $Z$ denotes the value distribution, $\delta_{\theta_i}^{\tau, \tau'}$ is the sampled TD error based on the parameters $\theta_i$, $\tau$ and $\tau'$ are two samples drawn from a uniform distribution $U([0, 1])$, $D_{\theta}^\tau(s,a,r) := F^{-1}_{D_{\theta}(s,a,r)}(\tau)$ is the sample drawn from $p_{\theta}(\cdot|s,a,r)$, $Z^\tau(s) := F^{-1}_{Z(s)}(\tau)$ is sampled from $p(\cdot|s)$, $F^{-1}_X(\tau)$ is the inverse cumulative distribution function (also known as quantile function)~\cite{quantile-func} at $\tau$ for the random variable $X$, and $N$ and $N'$ represent the respective number of iid samples $\tau$ and $\tau'$.
Notably, based on \cite{iqn}, we have $Q_{\theta_i}(s,a) = \sum_{n=1}^N D_{\theta_i}^{\tau_n}(s,a,r).$

% based on Equation~\eqref{eq-dtd}, 
In addition, if we learn the value distribution $Z$, the action-value distribution can extract the information from the next states based on Equation~\eqref{eq-dtd}, which is effective for capturing the uncertainty.
On the contrary, if we directly use the next states in the offline dataset as the observations, in practice, the parameterized model of the action-value distribution needs to take $(s, a, r, s',a',r',s'')$ as the input data. Thus, the model can compute the action values and values for the sampled TD error in Equation~\eqref{eq-dtd}.
To avoid the changes in the input data caused by directly using next states as observations in Bayesian inference, we draw inspiration from IQL and RIQL to learn a parameterized value distribution. Based on Equations~\eqref{eq-value1} and \eqref{eq-dtd}, we derive a new objective as:
\begin{equation}
    \mathcal{L}_Z(\psi) = \mathbb{E}_{(s, a) \sim \mathcal{B}}\left[
        \sum_{n=1}^N \mathcal{L}_2^\nu\left( D_\alpha^{\tau_n}(s, a, r) - Z_\psi^{\tau_n}(s)\right)
    \right],
    \label{eq-value2}
\end{equation}
where $D_{\alpha}^\tau$ is the $\alpha$-quantile value among $\{ D_{\theta_i}^\tau(s, a) \}_{i=1}^K$, and $V_{\psi}(s) = \sum_{n=1}^N Z_{\psi}^{\tau_n}(s)$.
More details are shown in Appendix~\ref{app-imple}.
Furthermore, we provide the theoretical analysis in Appendix~\ref{app-proof} to give a value bound between the value distributions under clean and corrupted data.

% Note that based on Equations~\ref{eq-dtd} and \eqref{eq-value2}, we rewrite the objective function for policy improvement:
% \begin{equation}
%     \mathcal{L}_\pi(\phi)=\mathbb{E}_{(s, a) \sim \mathcal{B}}\left[\exp (\beta \cdot A_{\alpha}(s, a)) \log \pi_\phi(a | s)\right], \quad A_{\alpha}(s, a)=Q_{\alpha}(s, a)-V_\psi(s) .
% \end{equation}

\textit{\textbf{Updating the Action-Value Distribution based on Variational Inference for Robustness.}}
We discuss the detailed implementation of Equations~\eqref{eq-first} and \eqref{eq-second} based on Equations~\eqref{eq-dtd} and \eqref{eq-value2}.
As the data corruptions may introduce heavy-tailed targets~\cite{riql}, we apply the Huber loss to replace all quadratic loss in Equation~\eqref{eq-first} and the metric $\ell$ in Equation~\eqref{eq-second}, mitigating the issue caused by heavy-tailed targets~\cite{heavy-tail} for robustness. We rewrite Equation~\eqref{eq-second} as follows.
\begin{align}
    \mathcal{L}_{\text{second}}(\theta_i,\varphi_s,\varphi_a,\varphi_r) & = \mathbb{E}_{(s,a,r) \sim \mathcal{B}, \hat{s} \sim p_{\varphi_s},\hat{a} \sim p_{\varphi_a},\hat{r} \sim p_{\varphi_r}, D^\tau \sim p } \left[
        \sum_{n=1}^N l_H^\kappa \big(D^{\tau_n}(s,a,r), D^{\tau_n}_{\theta_i}(s,\hat{a},r)\big) \right. \nonumber \\
    & + \Bigg.
         l_H^\kappa \big(D^{\tau_n}(s,a,r), D^{\tau_n}_{\theta_i}(s,a,\hat{r})\big) + l_H^\kappa \big(D^{\tau_n}(s,a,r), D^{\tau_n}_{\theta_i}(\hat{s},a,r)\big)
    \Bigg].
    \label{eq-second2}
\end{align}
Thus, we have the whole loss function $\mathcal{L}_{D|S,A,R} = \mathcal{L}_{\text{first}}(\theta_i,\varphi_s,\varphi_a,\varphi_r) + \mathcal{L}_{\text{second}}(\theta_i,\varphi_s,\varphi_a,\varphi_r)$ in the generalized variational inference framework.
Moreover, based on the assumption of heavy-tailed noise in \cite{riql}, we have a upper bound of action-value distribution by using the Huber regression loss.

\textit{\textbf{Entropy-based Uncertainty Measure for Regulating the Loss associated with Corrupted Data.}}
To further address the challenge posed by diverse data corruptions, we consider the problem: how to exploit uncertainty to further enhance robustness?

Considering that our goal is to improve performance in clean environments, we propose to reduce the influence of corrupted data, focusing on using clean data to learn agents.
Therefore, we provide a two-step plan: (1) distinguishing corrupted data from clean data; (2) regulating the loss associated with corrupted data to reduce its influence, thus enhancing the performance in clean environments.

For (1), as the Shannon entropy for the measures of aleatoric and epistemic uncertainties provides important insight~\cite{caprio2023credal,CaprioSHL24,abs-2308-14815}, and the corrupted data often results in higher uncertainty and entropy of the action-value distribution than the clean data, we use entropy~\cite{entropy} to quantify uncertainties of corrupted and clean data.
Furthermore, by considering that the exponential function can amplify the numerical difference in entropy between corrupted and clean data, we propose the use of exponential entropy~\cite{exp-entropy}---a metric of extent of a distribution---to design our uncertainty measure.

Specifically, based on Equation~\ref{eq-dtd}, we can use the quantile points $\{\tau_n \}_{n=1}^N$ to learn the corresponding quantile function values $\{D^{\tau_n}\}_{n=1}^N$ drawn from the action-value distribution $p_\theta$.
We sort the quantile points and their corresponding function values in ascending order based on the values. Thus, we have the sorted sets $\{\varsigma_n \}_{n=1}^N$, $\{D^{\varsigma_n}\}_{n=1}^N$, and the estimated PDF values $\{ \overline{\varsigma}_n \}_{n=1}^N$, where $\overline{\varsigma}_1 = \varsigma_1$ and $\overline{\varsigma}_n = \varsigma_{n} - \varsigma_{n-1}$ for $1 < n \leq N$.
Then, we can further estimate differential entropy following \cite{differential-entropy} (see Appendix~\ref{app-entro} for a detailed derivation).
\begin{equation}
    \mathcal{H} (p_{\theta_i}(\cdot | s,a,r)) = - \sum_{n=1}^N \hat{\varsigma}_n \cdot \overline{D}_{\theta_i}^{\varsigma_n}(s,a,r) \cdot \log \hat{\varsigma}_n,
    \label{eq-entropy}
\end{equation}
where $\hat{\varsigma}_n$ denotes $(\overline{\varsigma}_{n-1} + \overline{\varsigma}_n) / 2$ for $1 < n \leq N$, and $\overline{D}^{\varsigma_n}$ denotes $D^{\varsigma_n} - D^{\varsigma_{n-1}}$ for $1 < n \leq N$.

For (2), \Method{} employs the reciprocal value of exponential entropy $1 / \exp({\mathcal{H}(p_{\theta_i})})$ to weight the corresponding loss of $\theta_i$ in our proposed whole loss function $\mathcal{L}_{D|S,A,R}$.
Therefore, during the learning process, \Method{} can regulate the loss associated with corrupted data and focus on minimizing the loss associated with clean data, enhancing robustness and performance in clean environments.
Note that we normalize entropy values by dividing the mean of samples (i.e., quantile function values) drawn from action-value distributions for each batch.
In Figure~\ref{trace_entropy}, we show the relationship of entropy values of corrupted and clean data estimated by Equation~\eqref{eq-entropy} during the learning process.
The results illustrate the effectiveness of the entropy-weighted technique for data corruptions.

\textit{\textbf{Updating the Policy based on the Action-Value Distribution.}}
We directly applies the weighted imitation learning technique in Equation~\eqref{eq-policy} to learn the policy. As $Q_{\alpha}(s, a)$ is the $\alpha$-quantile value among $\{ Q_{\theta_i}(s, a) \}_{i=1}^K = \left\{ \sum_{n=1}^N D_{\theta_i}^{\tau_n}(s,a,r) \right\}_{i=1}^K$ and $V_{\psi}(s) = \sum_{n=1}^N Z_{\psi}^{\tau_n}(s)$, we have

\begin{equation}\label{eq-tracer-policy}
    \mathcal{L}_\pi(\phi)=\mathbb{E}_{(s, a) \sim \mathcal{B}}\left[\exp \left(\beta \cdot Q_{\alpha}(s, a) - \beta \cdot \sum_{n=1}^N Z_{\psi}^{\tau_n}(s) \right) \cdot \log \pi_\phi(a | s)\right].
\end{equation}

\begin{table}[H]
  \centering
  \caption{Average scores and standard errors under random and adversarial simultaneous corruptions.}
  \label{table_1}
  \resizebox{1.\textwidth}{!}{%
    \Large
    \begin{tabular}{cl|c|c|c|c|c|c|c|c}
    \toprule
    \multicolumn{1}{l|}{Env}                          & Corrupt & BC               & EDAC             & MSG              & UWMSG            & CQL               & IQL               & RIQL              & \Method{} (ours) \\ 
    \midrule
    \multicolumn{1}{l|}{\multirow{2}{*}{Halfcheetah}} & random & $23.17 \pm 0.43$ & $1.70 \pm 0.80$  & $9.97 \pm 3.44$  & $8.31 \pm 1.25$  &   $14.25 \pm 1.39$   &    $24.82 \pm 0.57$    & $29.94 \pm 1.00$  & \textcolor{blue}{$ \textbf{33.04} \pm \textbf{0.42} $} \\
    \multicolumn{1}{l|}{}                             & advers & $16.37 \pm 0.32$ & $0.90 \pm 0.30$  &    $3.60 \pm 0.89$    &   $3.13 \pm 0.85$          &   $5.61 \pm 2.21$   & $11.06 \pm 0.45$  & $17.85 \pm 1.39$   & \textcolor{blue}{$ \textbf{19.72} \pm \textbf{2.80} $} \\ 
    \midrule
    \multicolumn{1}{l|}{\multirow{2}{*}{Walker2d}}    & random & $13.77 \pm 1.05$ & $-0.13 \pm 0.01$ & $-0.15 \pm 0.11$ & $4.36 \pm 1.95$  &   $0.63 \pm 0.36$  &    $12.35 \pm 2.03$   & $17.42 \pm 2.95$  & \textcolor{blue}{$ \textbf{23.62} \pm \textbf{2.33} $} \\
    \multicolumn{1}{l|}{}                             & advers & $6.75 \pm 0.33$  & $-0.17 \pm 0.01$ &    $3.77 \pm 1.09$   & $4.19 \pm 2.82$  &    $4.23 \pm 1.35$    & $16.61 \pm 2.73$  & $9.20 \pm 1.40$   & \textcolor{blue}{$ \textbf{17.21} \pm \textbf{1.62} $} \\ 
    \midrule
    \multicolumn{1}{l|}{\multirow{2}{*}{Hopper}}      & random &   $18.49 \pm 0.52$   &     $0.80 \pm 0.01$     &    $15.84 \pm 2.47$     & $12.22 \pm 2.11$ &    $3.16 \pm 1.07$    & $25.28 \pm 15.34$ & $22.50 \pm 10.01$ &  \textcolor{blue}{$ \textbf{28.83} \pm \textbf{7.06} $} \\
    \multicolumn{1}{l|}{}                             & advers & $17.34 \pm 1.00$ & $0.80 \pm 0.01$  &    $12.14 \pm 0.71$   & $10.43 \pm 0.94$ &    $0.10 \pm 0.34$    & $19.56 \pm 1.08$  & $24.71 \pm 6.20$  & \textcolor{blue}{$\textbf{24.80} \pm \textbf{7.14}$} \\ 
    \midrule
    \multicolumn{2}{l|}{Average score}                         &         $15.98$         &      $0.65$            &     $7.53$     &        $7.11$        &         $4.66$          &       $18.28$            &          $20.27$         & \textcolor{blue}{$\textbf{24.54}$} \\ 
    \bottomrule
    \end{tabular}%
  }
\end{table}

\begin{table}[H]
  \centering
  \caption{Average score under diverse random corruptions.}
  \resizebox{\textwidth}{!}{%
    \begin{tabular}{cl|c|c|c|c|c|c|c|c}
    \toprule
    \multicolumn{1}{l|}{Env}                  & Corrupted Element & BC & EDAC & MSG & UWMSG & CQL & IQL & RIQL & \Method{} (ours) \\ 
    \midrule
    \multicolumn{1}{l|}{\multirow{4}{*}{Halfcheetah}} & observation      &  $33.4 \pm 1.8$  &   $2.1 \pm 0.5$  &   $-0.2 \pm 2.2$  &  $2.9 \pm 0.1$  &  $9.0 \pm 7.5$   &  $21.4 \pm 1.9$   &   $27.3 \pm 2.4$    & \textcolor{blue}{$\textbf{34.2} \pm \textbf{0.9}$} \\
    \multicolumn{1}{l|}{}                             & action           &  $36.2 \pm 0.3$  &  $47.4 \pm 1.3$  &  $52.0 \pm 0.9$  &   \textcolor{blue}{$\textbf{56.0} \pm \textbf{0.4}$}   &  $19.9 \pm 21.3$   &  $42.2 \pm 1.9$   &   $42.9 \pm 0.6$   & $42.9 \pm 0.6$ \\ 
    \multicolumn{1}{l|}{}                             & reward           &  $35.8 \pm 0.9$  &   $38.6 \pm 0.3$  &   $17.5 \pm 16.4$  &   $35.6 \pm 0.4$   &  $32.6 \pm 19.6$   &   $42.3 \pm 0.4$  &   \textcolor{blue}{$\textbf{43.6} \pm \textbf{0.6}$}    & $40.0 \pm 1.1$ \\ 
    \multicolumn{1}{l|}{}                             & dynamics         &  $35.8 \pm 0.9$  &   $1.5 \pm 0.2$  &   $1.7 \pm 0.4$  &   $2.9 \pm 0.1$   &  $29.2 \pm 4.0$   &  $36.7 \pm 1.8$   &  $43.1 \pm 0.2$   &  \textcolor{blue}{$\textbf{43.8} \pm \textbf{3.0}$} \\ 
    \midrule
    \multicolumn{1}{l|}{\multirow{4}{*}{Walker2d}}    & observation      &  $9.6 \pm 3.9$  &   $-0.2 \pm 0.3$  &  $-0.4 \pm 0.1$   &   $6.2 \pm 0.5$   &  $19.4 \pm 1.6$  &   $27.2 \pm 5.1$  &   $28.4 \pm 7.7$   & \textcolor{blue}{$\textbf{32.7} \pm \textbf{2.8}$} \\
    \multicolumn{1}{l|}{}                             & action           &  $18.1 \pm 2.1$  &   $83.2 \pm 1.9$  &   $25.3 \pm 10.6$  &   $31.5 \pm 10.6$   &  $62.7 \pm 7.2$  &  $71.3 \pm 7.8$  &  $84.6 \pm 3.3$   & \textcolor{blue}{$\textbf{86.7} \pm \textbf{6.2}$} \\ 
    \multicolumn{1}{l|}{}                             & reward           &  $16.0 \pm 7.4$  &   $4.3 \pm 3.6$  &  $18.4 \pm 9.5$   &   $62.0 \pm 3.7$   &  $69.4 \pm 7.4$  &  $65.3 \pm 8.4$   &   $83.2 \pm 2.6$   & \textcolor{blue}{$\textbf{85.5} \pm \textbf{3.1}$} \\ 
    \multicolumn{1}{l|}{}                             & dynamics         &  $16.0 \pm 7.4$  &   $-0.1 \pm 0.0$  &   $7.4 \pm 3.7$  &   $0.2 \pm 0.0$   &   $-0.2 \pm 0.1$  &   $17.7 \pm 7.3$  &   \textcolor{blue}{$\textbf{78.2} \pm \textbf{1.8}$}    & $75.9 \pm 1.8$ \\ 
    \midrule
    \multicolumn{1}{l|}{\multirow{4}{*}{Hopper}}      & observation      &  $21.5 \pm 2.9$  &  $1.0 \pm 0.5$  &  $6.9 \pm 5.0$  &   $12.8 \pm 0.4$   &  $42.8 \pm 7.0$   &   $52.0 \pm 16.6$  &   $62.4 \pm 1.8$    & \textcolor{blue}{$\textbf{62.7} \pm \textbf{8.2}$} \\
    \multicolumn{1}{l|}{}                             & action           &  $22.8 \pm 7.0$  &  \textcolor{blue}{$\textbf{100.8} \pm \textbf{0.5}$}  &  $37.6 \pm 6.5$  &   $53.4 \pm 5.4$   &   $69.8 \pm 4.5$  &   $76.3 \pm 15.4$  &   $90.6 \pm 5.6$  & $92.8 \pm 2.5$\\ 
    \multicolumn{1}{l|}{}                             & reward           &  $19.5 \pm 3.4$  &  $2.6 \pm 0.7$   &   $24.9 \pm 4.3$  &   $60.8 \pm 7.5$   &   $70.8 \pm 8.9$  &  $69.7 \pm 18.8$   &   $84.8 \pm 13.1$   & \textcolor{blue}{$\textbf{85.7} \pm \textbf{1.4}$} \\ 
    \multicolumn{1}{l|}{}                             & dynamics         &  $19.5 \pm 3.4$  &  $0.8 \pm 0.0$  &   $12.4 \pm 4.9$  &   $6.1 \pm 1.3$   &   $0.8 \pm 0.0$  &   $1.3 \pm 0.5$  &   \textcolor{blue}{$\textbf{51.5} \pm \textbf{8.1}$}   & $49.8 \pm 5.3$\\ 
    \midrule
    \multicolumn{2}{l|}{Average score}                              &  $23.7$  &  $23.5$   &  $17.0$  &   $27.5$   &  $35.5$   &   $43.6$  &   $60.0$   & \textcolor{blue}{$\textbf{61.1}$} \\ 
    \bottomrule
    \end{tabular}%
    }
    \label{table_random}
\end{table}

\begin{table}[H]
  \centering
  \caption{Average score under diverse adversarial corruptions.}
  \resizebox{\textwidth}{!}{%
    \begin{tabular}{cl|c|c|c|c|c|c|c|c}
    \toprule
    \multicolumn{1}{l|}{Env}                  & Corrupted Element & BC & EDAC & MSG & UWMSG & CQL & IQL & RIQL & \Method{} (ours) \\ 
    \midrule
    \multicolumn{1}{l|}{\multirow{4}{*}{Halfcheetah}} & observation      &  $34.5 \pm 1.5$  &   $1.1 \pm 0.3$  &   $1.1 \pm 0.2$  &    $1.9 \pm 0.1$  &  $5.0 \pm 11.6$  &  $32.6 \pm 2.7$   &   $35.7 \pm 4.2$      & \textcolor{blue}{$\textbf{36.8} \pm \textbf{2.1}$} \\
    \multicolumn{1}{l|}{}                             & action           &  $14.0 \pm 1.1$  &   $32.7 \pm 0.7$  &   \textcolor{blue}{$\textbf{37.3} \pm \textbf{0.7}$}  &   $36.2 \pm 1.0$ 
    &  $-2.3 \pm 1.2$  &  $27.5 \pm 0.3$   &   $31.7 \pm 1.7$   & $33.4 \pm 1.2$ \\ 
    \multicolumn{1}{l|}{}                             & reward           &  $35.8 \pm 0.9$  &  $40.3 \pm 0.5$   &   \textcolor{blue}{$\textbf{47.7} \pm \textbf{0.4}$}   &   $43.8 \pm 0.3$   &   $-1.7 \pm 0.3$  &   $42.6 \pm 0.4$  &   $44.1 \pm 0.8$   & $41.9 \pm 0.2$ \\ 
    \multicolumn{1}{l|}{}                             & dynamics         &  $35.8 \pm 0.9$  &   $-1.3 \pm 0.1$  &   $-1.5 \pm 0.0$   &  $5.0 \pm 2.2$  &  $-1.6 \pm 0.0$   &  $26.7 \pm 0.7$   &   $35.8 \pm 2.1$    & \textcolor{blue}{$\textbf{36.2} \pm \textbf{1.2}$} \\ 
    \midrule
    \multicolumn{1}{l|}{\multirow{4}{*}{Walker2d}}    & observation      &  $12.7 \pm 5.9$  &  $-0.0 \pm 0.1$   &  $2.9 \pm 2.7$   &   $6.3 \pm 0.7$   &   $61.8 \pm 7.4$  &   $37.7 \pm 13.0$  &   \textcolor{blue}{$\textbf{70.0} \pm \textbf{5.3}$}   & \textcolor{blue}{$\textbf{70.0} \pm \textbf{6.7}$} \\
    \multicolumn{1}{l|}{}                             & action           &  $5.4 \pm 0.4$  &   $41.9 \pm 24.0$  &   $5.4 \pm 0.9$  &   $5.9 \pm 0.4$    &   $27.0 \pm 7.5$  &   $27.5 \pm 0.6$  &   $66.1 \pm 4.6$  & \textcolor{blue}{$\textbf{69.3} \pm \textbf{4.6}$} \\ 
    \multicolumn{1}{l|}{}                             & reward           &  $16.0 \pm 7.4$  &   $57.3 \pm 33.2$  &   $9.6 \pm 4.9$  &   $35.1 \pm 10.5$   &  $67.0 \pm 6.1$   &   $73.5 \pm 4.9$  &   $85.0 \pm 1.5$   & \textcolor{blue}{$\textbf{88.9} \pm \textbf{4.7}$} \\ 
\multicolumn{1}{l|}{}                             & dynamics         &  $16.0 \pm 7.4$  &   $4.3 \pm 0.9$  &  $0.1 \pm 0.2$   &  $1.8 \pm 0.2$  &   $3.9 \pm 1.4$  &  $-0.1 \pm 0.1$   &   $60.6 \pm 21.8$   & \textcolor{blue}{$\textbf{64.0} \pm \textbf{16.5}$} \\ 
    \midrule
    \multicolumn{1}{l|}{\multirow{4}{*}{Hopper}}      & observation      &  $21.6 \pm 7.1$  &   $36.2 \pm 16.2$  &   $16.0 \pm 2.8$  &  $15.0 \pm 1.3$  &  \textcolor{blue}{$\textbf{78.0} \pm \textbf{6.5}$}   &   $32.8 \pm 6.4$  &   $50.8 \pm 7.6$    & $64.5 \pm 3.7$\\
    \multicolumn{1}{l|}{}                             & action           &  $15.5 \pm 2.2$  &   $25.7 \pm 3.8$  &   $23.0 \pm 2.1$  &  $27.7 \pm 1.3$  &   $32.2 \pm 7.6$  &   $37.9 \pm 4.8$  &   $63.6 \pm 7.3$   & \textcolor{blue}{$\textbf{67.2} \pm \textbf{3.8}$} \\ 
    \multicolumn{1}{l|}{}                             & reward           &  $19.5 \pm 3.4$  &  $21.2 \pm 1.9$   &  $22.6 \pm 2.8$   &  $30.3 \pm 4.2$  &   $49.6 \pm 12.3$  &   $57.3 \pm 9.7$  &   \textcolor{blue}{$\textbf{65.8} \pm \textbf{9.8}$}   & $64.3 \pm 1.5$\\ 
    \multicolumn{1}{l|}{}                             & dynamics         &  $19.5 \pm 3.4$  &   $0.6 \pm 0.0$  &   $0.6 \pm 0.0$  &  $0.7 \pm 0.0$  &  $0.6 \pm 0.0$   &  $1.3 \pm 1.1$   &   \textcolor{blue}{$\textbf{65.7} \pm \textbf{21.1}$}     & $61.1 \pm 6.2$\\ 
    \midrule
    \multicolumn{2}{l|}{Average score}                              &  $20.5$  &   $21.7$  &  $13.7$   &   $17.5$   &  $26.6$   &   $33.1$  &   $56.2$    & \textcolor{blue}{$\textbf{58.1}$} \\ 
    \bottomrule
    \end{tabular}%
    }
    \label{table_advers}
\end{table}

\section{Experiments}
\label{sec-exp}
In this section, we show the effectiveness of \Method{} across various simulation tasks using diverse corrupted offline datasets.
Firstly, we provide our experiment setting, focusing on the corruption settings for offline datasets.
Then, we illustrate how \Method{} significantly outperforms previous state-of-the-art approaches under a range of both individual and simultaneous data corruptions.
Finally, we conduct validation experiments and ablation studies to show the effectiveness of \Method{}.
% See Appendix~\ref{app-exp} for details.

\subsection{Experiment Setting}
\label{exp-setting}
Building upon RIQL~\cite{riql}, we use two hyperparameters, i.e., corruption rate $c \in [0,1]$ and corruption scale $\epsilon$, to control the corruption level.
Then, we introduce the \textit{random corruption} and \textit{adversarial corruption} in four elements (i.e., states, actions, rewards, next states) of offline datasets.
The implementation of random corruption is to add random noise to elements of a $c$ portion of the offline datasets, and the implementation of adversarial corruption follows the Projected Gradient Descent attack~\cite{pgd1,pgd2} using pretrained value functions.
Note that unlike other adversarial corruptions, the adversarial reward corruption multiplies $-\epsilon$ to the clean rewards instead of using gradient optimization.
We also introduce the random or adversarial simultaneous corruption, which refers to random or adversarial corruption simultaneously present in four elements of the offline datasets.
We apply the corruption rate $c = 0.3$ and corruption scale $\epsilon = 1.0$ in our experiments.
% See Appendix~\ref{app-exp} for more details.

We conduct experiments on D4RL benchmark~\cite{d4rl}.
Referring to RIQL, we train all agents for 3000 epochs on the 'medium-replay-v2' dataset, which closely mirrors real-world applications as it is collected during the training of a SAC agent. Then, we evaluate agents in clean environments, reporting the average normalized performance over four random seeds.
See Appendix~\ref{app-exp} for detailed information.
% , including MuJoCo environments and more challenging scenarios such as AntMaze and CARLA environments.
The algorithms we compare include: 
(1) CQL~\cite{cql} and IQL~\cite{iql}, offline RL algorithms using a twin Q networks.
(2) EDAC~\cite{edac} and MSG~\cite{msg}, offline RL algorithms using ensemble Q networks (number of ensembles $> 2$).
(3) UWMSG~\cite{corrupt-robust-general-func} and RIQL, state-of-the-arts in corruption-robust offline RL.
Note that EDAC, MSG, and UWMSG are all uncertainty-based offline RL algorithms.
% (4) CBOP~\cite{cbop}, which is the state-of-the-art baseline in bayesian offline RL.
% We train \Method{} for 3000 epochs with hyperparameters defined in Appendix~\ref{app-exp}. We train the baseline algorithms with the corresponding suggested hyperparameters in specific environments and datasets.
% Following RIQL, we train all algorithms with 3000 epochs in “medium-replay-v2” dataset, which better represents real scenarios because it
% is collected during the training of a SAC agent. Then, we evaluate each learned agent in the clean environment, and report average normalized performance over four random seeds.
% See Appendix~\ref{app-exp} for detailed information.

\begin{figure}
  \centering
  \includegraphics[width=0.47\textwidth]{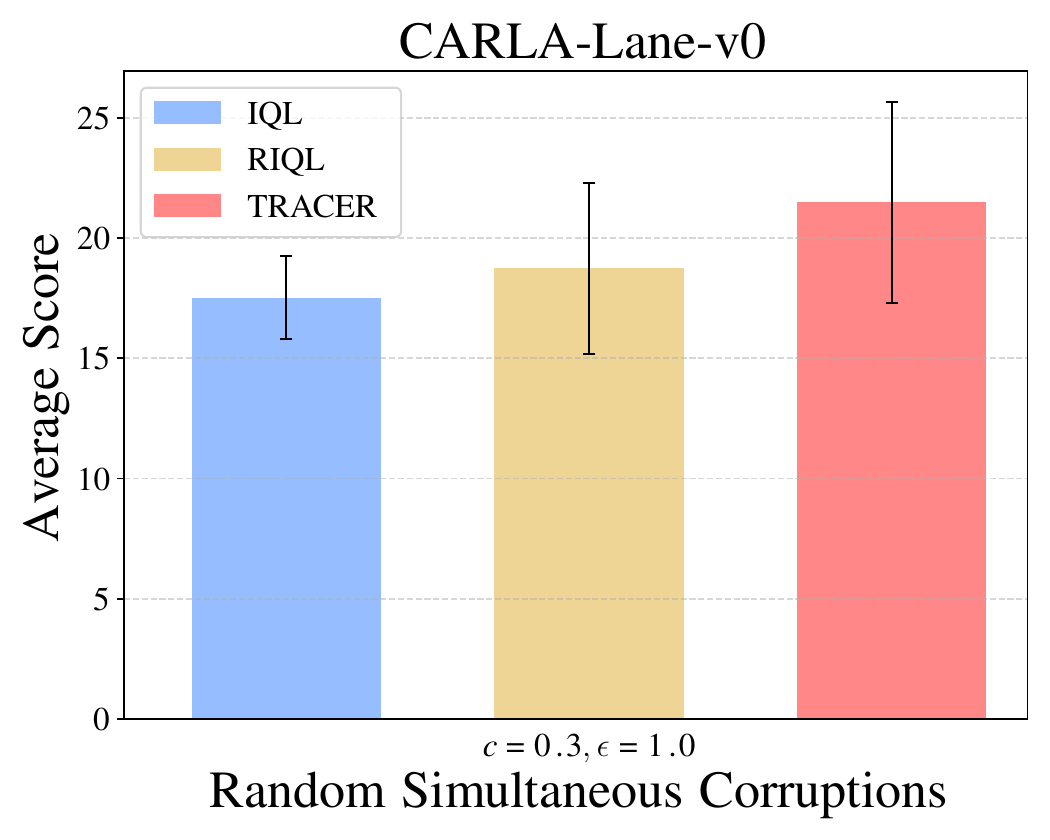}
  \includegraphics[width=0.47\textwidth]{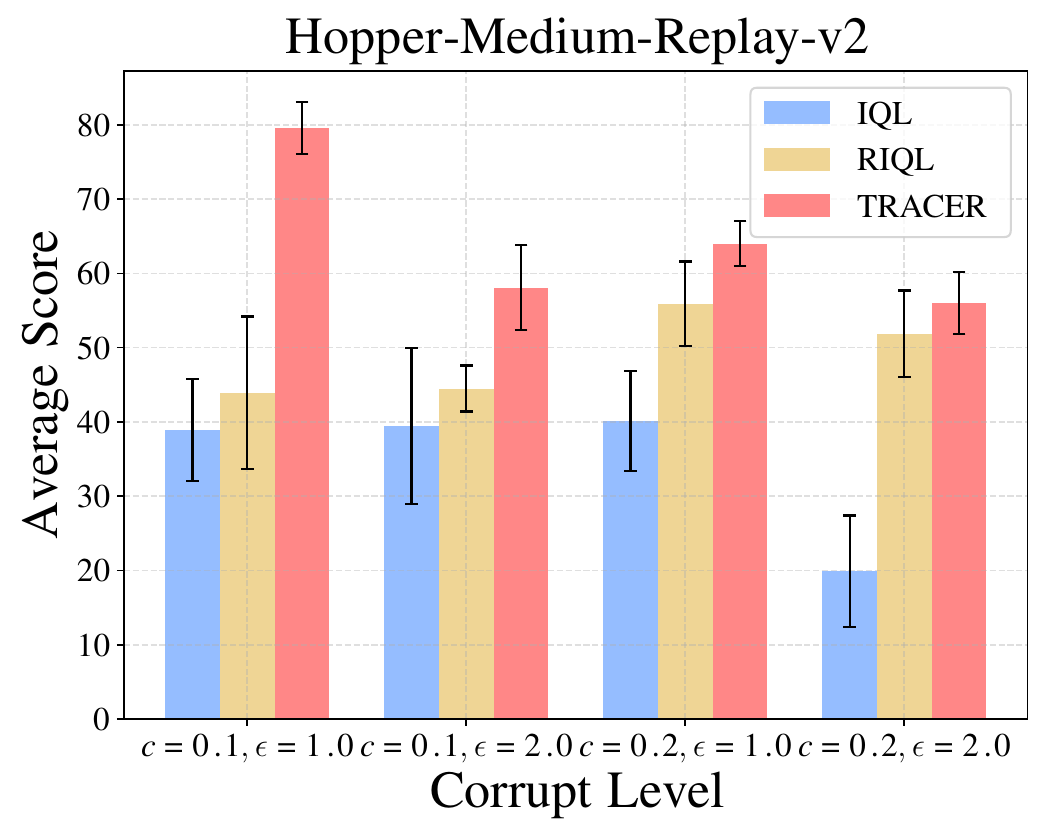}

  \caption{In the left, we report the means and standard deviations on CARLA under random simultaneous corruptions. In the right, we report the results with random simultaneous corruptions against different corruption levels.}
  % \vspace{-0.51cm}
  \vskip -0.1in
  \label{fig-carla}
\end{figure}

\subsection{Main results under Diverse Data Corruptions}
\label{exp-main}
We conduct experiments on MuJoCo~\cite{mujoco} (see Tables~\ref{table_1}, \ref{table_random}, and \ref{table_advers}, which highlight the \textit{highest} results) and CARLA~\cite{carla} (see the left of Figure~\ref{fig-carla}) tasks from D4RL under diverse corruptions to show the superiority of \Method{}.
In Table~\ref{table_1}, we report all results under random or adversarial simultaneous data corruptions. These results show that \Method{} significantly outperforms other algorithms in all tasks, achieving an average score improvement of $\mathbf{+21.1}\%$.
In the left of Figure~\ref{fig-carla}, results on 'CARLA-lane\_v0' under random simultaneous corruptions also illustrate the superiority of \Method{}. See Appendix~\ref{app-carla} for details.

% We conduct random and adversarial simultaneous corruptions to the offline datasets, which is distinct from most previous corruption settings. Specifically, we corrupt $c$ portion of each element $(s,a,r,s^\prime)$ in the offline dataset sequentially, thus leading to a stronger and more realistic corruption setting.

\paragraph{Random Corruptions.}
We report the results under random simultaneous data corruptions of all algorithms in Table~\ref{table_1}.
Such results demonstrate that \Method{} achieves an average score gain of $\mathbf{+22.4}\%$ under the setting of random simultaneous corruptions.
Based on the results, it is clear that many offline RL algorithms, such as EDAC, MSG, and CQL, suffer the performance degradation under data corruptions.
Since UWMSG is designed to defend the corruptions in rewards and dynamics, its performance degrades when faced with the stronger random simultaneous corruption.
% The robust performance of CBOP is also poor, since it leverages Bayesian inference to tradeoff uncertainties in model free and model-based estimation without considering the impact of corrupted data.
Moreover, we report results across a range of individual random data corruptions in Table~\ref{table_random}, where \Method{} outperforms previous algorithms in 7 out of 12 settings.
% The results show that under the individual random data corruption, the performance improvement of \Method{} is not greater than that under random simultaneous corruptions. 
We then explore hyperparameter tuning on Hopper task and further improve \Method{'s} results, demonstrating its potential for performance gains. We provide details in Appendix~\ref{app-hyp-tuning}.

\paragraph{Adversarial Corruptions.}
We construct experiments under adversarial simultaneous corruptions to evaluate the robustness of \Method{}.
The results in Table~\ref{table_1} show that \Method{} surpasses others by a significant margin, achieving an average score improvement of $\mathbf{+19.3}\%$.
In these simultaneous corruption, many algorithms experience more severe performance degradation compared to the random simultaneous corruption, which indicates that adversarial attacks are more damaging to the reliability of algorithms than random noise.
Despite these challenges, \Method{} consistently achieves significant performance gains over other methods.
Moreover, we provide the results across a range of individual adversarial data corruptions in Table~\ref{table_advers}, where \Method{} outperforms previous algorithms in 7 out of 12 settings.
% Moreover, we provide the results across a range of individual adversarial data corruptions in Appendix~\ref{app-exp} to show the superiority of \Method{}.
We also explore hyperparameter tuning on Hopper task and further improve \Method{'s} results, demonstrating its potential for performance gains. See Appendix~\ref{app-hyp-tuning} for details.

\subsection{Evaluation of \Method{} under Various Corruption Levels}
Building upon RIQL, we further extend our experiments to include Mujoco datasets with various corruption levels, using different corruption rates $c$ and scales $\epsilon$. We report the average scores and standard deviations over four random seeds in the right of Figure~\ref{fig-carla}, using batch sizes of 256.

Results in the right of Figure~\ref{fig-carla} demonstrate that \Method{} significantly outperforms baseline algorithms in \textbf{all tasks} under random simultaneous corruptions with various corruption levels. It achieves an average score improvement of ${\bf +33.6\%}$.
Moreover, as the corruption levels increase, the slight decrease in \Method{'s} results indicates that while \Method{} is robust to simultaneous corruptions, its performance depends on the extent of corrupted data it encounters.
We also evaluate \Method{} in different scales of corrupted data and provide the results in Appendix~\ref{app-eval-scale}.

\subsection{Evaluation of the Entropy-based Uncertainty Measure}
\label{exp-eval}

% We conduct several evaluation and ablation studies with three random seeds for 1000 epochs.

% \paragraph{Effectiveness of Each Observed Data in Bayesian Inference}
% We individually assess the performance of \Method{} under the random simultaneous data corruptions using a single element as the observed data.
% We illustrate the results of \Method{} using only states, actions, or rewards as observed data as well as not using any observed data in Figure~\ref{ablate-each-element}.
% The results demonstrate that using states and actions as observed data effectively improve performance under corrupted rewards. Using states as observed data can improve performance under corrupted actions, whereas using actions as observed data does not significantly improve performance under corrupted states. Moreover, using rewards as observed data have a weaker effect. Considering that both actions and rewards can be seen as low-dimensional mappings of states, we believe that the information of observed data is key to effectively capturing uncertainty for performance improvement.

\begin{figure}
  \centering
  \includegraphics[width=0.32\textwidth]{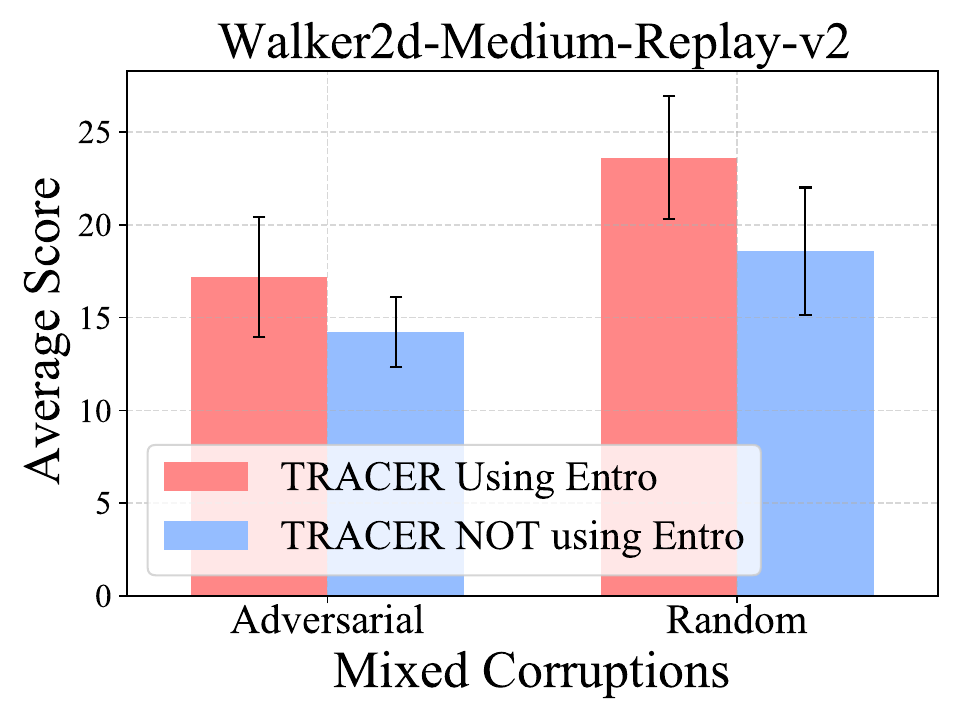}
  \includegraphics[width=0.32\textwidth]{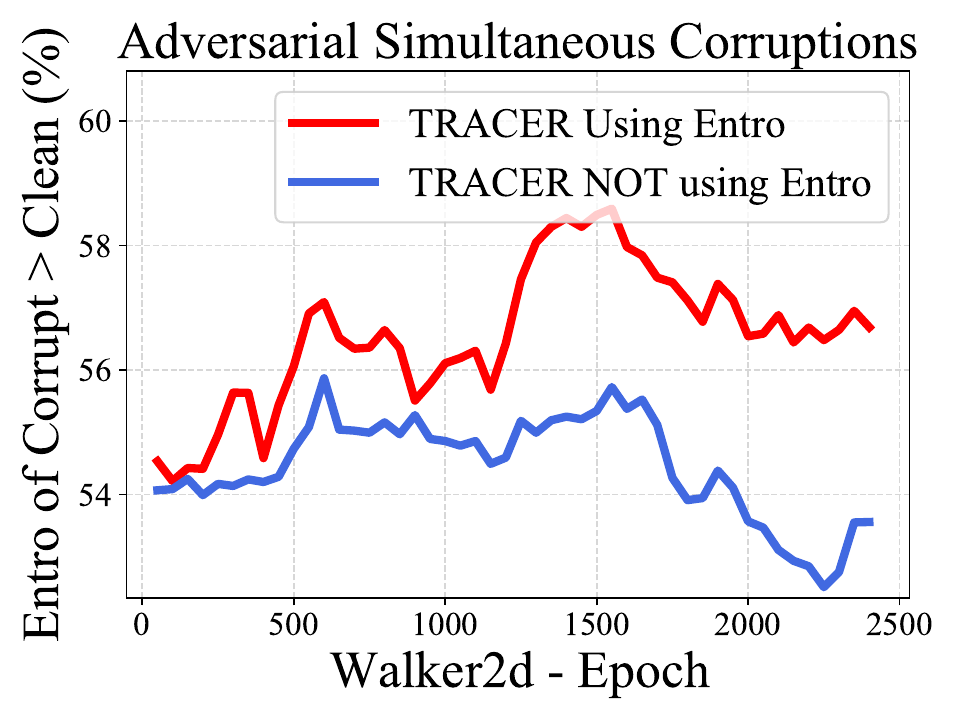}
  \includegraphics[width=0.32\textwidth]{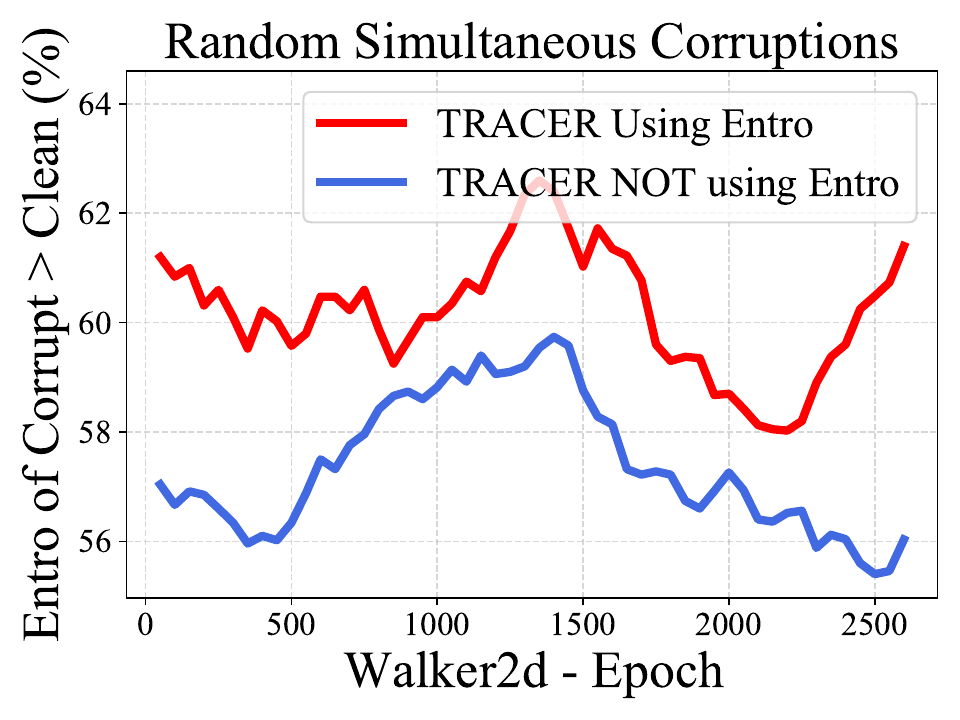}

  \includegraphics[width=0.32\textwidth]{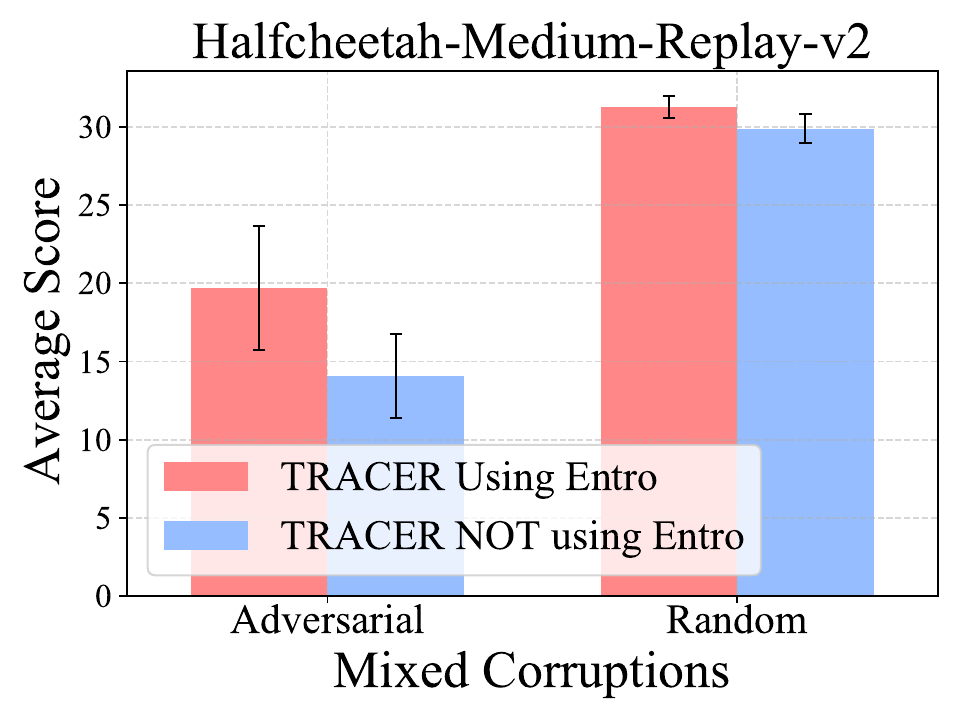}
  \includegraphics[width=0.32\textwidth]{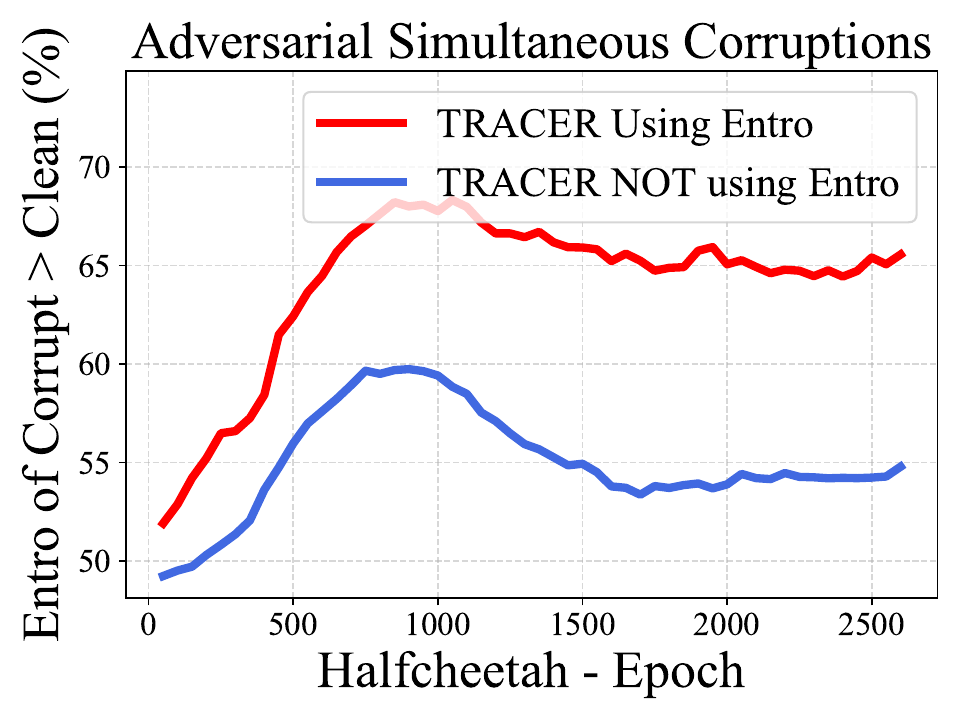}
  \includegraphics[width=0.32\textwidth]{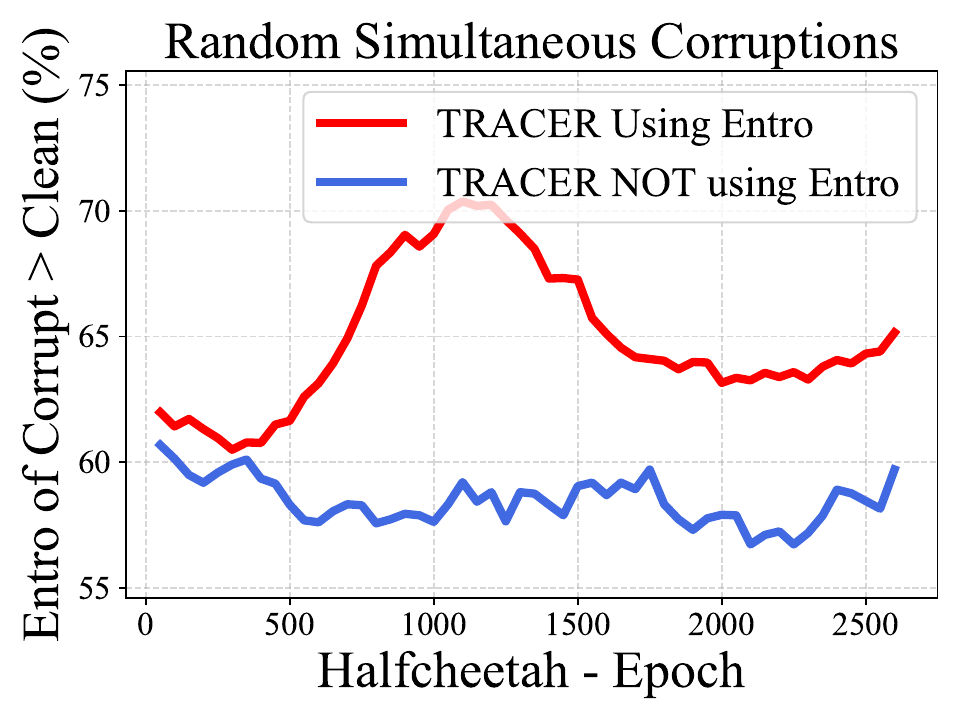}
  \caption{In the first column, we report the mean and standard deviation to show the superiority of using the entropy-based uncertainty measure. In the second and third columns, we report the results over three seeds to show the higher entropy of corrupted data compared to clean data during training.}
  % \vspace{-0.51cm}
  \vskip -0.1in
  \label{trace_entropy}
\end{figure}

% \paragraph{Entropy-based Uncertainty Measure.}
We evaluate the entropy-based uncertainty measure in action-value distributions to show: (1) is the uncertainty caused by corrupted data higher than that of clean data? (2) is regulating the loss associated with corrupted data effective in improving performance?

For (1), we first introduce labels indicating whether the data is corrupted. Importantly, these labels are not used by agents during the training process.
Then, we estimate entropy values of labelled corrupted and clean data in each batch based on Equation~\eqref{eq-entropy}.
Thus, we can compare entropy values to compute results, showing how many times the entropy of the corrupted data is higher than that of clean data.
Specifically, we evaluate the accuracy every 50 epochs over 3000 epochs. For each evaluation, we sample 500 batches to compute the average entropy of corrupted and clean data. Each batch consists of 32 clean and 32 corrupted data.
We illustrate the curves over three seeds in the second and third columns of Figure~\ref{trace_entropy}, where each point shows how many of the 500 batches have higher entropy for corrupted data than that of clean data.

Figure~\ref{trace_entropy} indicates an oscillating upward trend of \Method{'s} measurement accuracy using entropy (\textit{\Method{} Using Entro}) under simultaneous corruptions, demonstrating that using the entropy-based uncertainty measure can effectively distinguish corrupted data from clean data.
These curves also reveal that even in the absence of any constraints on entropy (\textit{\Method{} NOT using Entro}), the entropy associated with corrupted data tends to exceed that of clean data. 
% The results also show that \Method{} using entropy-based uncertainty measure (\textit{\Method{} Using Entro}) effectively distinguishes corrupted data from clean data, resulting in lower entropy for clean data compared to corrupted data during the training process.

% the application of the entropy-based uncertainty measure in \Method{} effectively identifies the uncertainty associated with clean data. Thus, \Method{'s} estimated uncertainty induced by corrupted data is often higher than that of clean data. Moreover, 
For (2), in the first column of Figure~\ref{trace_entropy}, these results demonstrate that \Method{} using the entropy-based uncertainty measure can effectively reduce the influence of corrupted data, thereby enhancing robustness and performance against all corruptions.
We provide detailed information for this evaluation in Appendix~\ref{app-eval}.

\section{Related Work}

\paragraph{Robust RL.}
Robust RL can be categorized into two types: testing-time robust RL and training-time robust RL. Testing-time robust RL~\cite{robust-offline-rl1, robust-offline-rl2} refers to training a policy on clean data and ensuring its robustness by testing in an environment with random noise or adversarial attacks. Training-time robust RL~\cite{corrupt-robust-adversarial, corrupt-robust-general-func} aims to learn a robust policy in the presence of random noise or adversarial attacks during training and evaluate the policy in a clean environment.
In this paper, we focus on training-time robust RL under the offline setting, where the offline training data is subject to various data corruptions, also known as \textit{corruption-robust offline RL}.

\subparagraph{Corruption-Robust RL.}
Some theoretical work on corruption-robust online RL~\cite{corruption_rubust_online1, corruption_rubust_online2, corruption_rubust_online3, corruption_rubust_online4} aims to analyze the sub-optimal bounds of learned policies under data corruptions. However, these studies primarily address simple bandits or tabular MDPs and focus on the reward corruption. Some further work~\cite{corruption_rubust_online5, corruption_rubust_online6} extends the modeling problem to more general MDPs and begins to investigate the corruption in transition dynamics.
% Moreover, \cite{corruption_online_model_selection} constructs a model set using a series of base algorithms and selects the model that performs best under the current corruption based on the corresponding regret bounds. 
% \cite{corruption_online_fpg} proposes replacing the general linear regression solver in the natural policy gradient~\cite{npg} with a robust linear regression for corrupted data.

% \subparagraph{Corruption-Robust Offline RL.}
It is worth noting that corruption-robust offline RL has not been widely studied.
UWMSG~\cite{corrupt-robust-general-func} designs a value-based uncertainty-weighting technique, thus using the weight to mitigate the impact of corrupted data.
RIQL~\cite{riql} further extends the data corruptions to all four elements in the offline dataset, including states, actions, rewards, and next states (dynamics). It then introduces quantile estimators with an ensemble of action-value functions and employs a Huber regression based on IQL~\cite{iql}, alleviating the performance degradation caused by corrupted data.

\paragraph{Bayesian RL.}
Bayesian RL integrates the Bayesian inference with RL to create a framework for decision-making under uncertainty~\cite{bayes-survey}.
It is important to highlight that Bayesian RL is divided into two categories for different uncertainties: the \textit{parameter uncertainty} in the learning of models~\cite{ktd,psrl} and the \textbf{\textit{inherent uncertainty}} from the data/environment in the distribution over returns~\cite{model-based-bayesian1,value-based-bayesian1}.
% Unlike traditional RL, which often relies on point estimates of model parameters or value functions, Bayesian RL aims to learn a distribution over these parameters. This probabilistic approach allows for robust and adaptive learning.
In this paper, we focus on capturing the latter.

% Bayesian RL is divided into model-based and model free.
For the latter uncertainty, in model-based Bayesian RL, many approaches~\cite{model-based-bayesian1,model-based-bayesian2,Wang00LL22} explicitly model the transition dynamics and using Bayesian inference to update the model. It is useful when dealing with complex environments for sample efficiency. %It is particularly useful when dealing with complex environments where a detailed understanding of the dynamics can significantly enhance learning efficiency.
% In model-based Bayesian RL, many existing approaches~\cite{} infer the posterior distribution of the transition dynamic model based on prior knowledge in the MDP, leading to accurate predictions of environment transition dynamics.
In model-free Bayesian RL, value-based methods~\cite{value-based-bayesian1, value-based-bayesian2} use the reward information to construct the posterior distribution of the action-value function.
Besides, policy gradient methods~\cite{policy-based-bayesian1, policy-based-bayesian2} use information of the return to construct the posterior distribution of the policy.
They directly apply Bayesian inference to the value function or policy without explicitly modeling transition dynamics.

\subparagraph{Offline Bayesian RL.}
offline Bayesian RL integrates Bayesian inference with offline RL to tackle the challenges of learning robust policies from static datasets without further interactions with the environment.
Many approaches~\cite{offline-bayesian1, offline-bayesian2, offline-bayesian3} use Bayesian inference to model the transition dynamics or guide action selection for adaptive policy updates, thereby avoiding overly conservative estimates in the offline setting.
Furthermore, recent work~\cite{offline-bayesian4} applies variational Bayesian inference to learn the model of transition dynamics, mitigating the distribution shift in offline RL. % between the policy represented in the offline dataset and the learned policy.
% These advancements illustrate the potential of Bayesian approaches to improve the robustness of offline RL for.

\section{Conclusion}
\label{sec-conclusion}
In this paper, we investigate and demonstrate the robustness and effectiveness of introducing Bayesian inference into offline RL to address the challenges posed by data corruptions.
By leveraging Bayesian techniques, our proposed approach \Method{} captures the uncertainty caused by diverse corrupted data.
Moreover, the use of entropy-based uncertainty measure in \Method{} can distinguish corrupted data from clean data. Thus, \Method{} can regulate the loss associated with corrupted data to reduce its influence, improving performance in clean environments.
Our extensive experiments demonstrate the potential of Bayesian methods in developing reliable decision-making.
% We hope that \Method{} sheds light on effectively addressing uncertainties caused by corrupted data to enhance robustness in deep offline RL research and practical implementations.

Regarding the limitations of \Method{}, although it achieves significant performance improvement under diverse data corruptions, future work could explore more complex and realistic data corruption scenarios and related challenges, such as the noise in the preference data for RLHF and adversarial attacks on safety-critical driving decisions.
Moreover, we look forward to the continued development and optimization of uncertainty-based corrupted-robust offline RL, which could further enhance the effectiveness of \Method{} and similar approaches for increasingly complex real-world scenarios.

\newpage

\section*{Acknowledgments}

We would like to thank all the anonymous reviewers for their insightful comments. This work was supported in part by National Key R\&D Program of China under contract 2022ZD0119801, National Nature Science Foundations of China grants U23A20388 and 62021001, and DiDi GAIA Collaborative Research Funds.

% \section*{References}
\bibliographystyle{unsrt}
\medskip

{
\small

\bibliography{reference}

% [1] Alexander, J.A.\ \& Mozer, M.C.\ (1995) Template-based algorithms for
% connectionist rule extraction. In G.\ Tesauro, D.S.\ Touretzky and T.K.\ Leen
% (eds.), {\it Advances in Neural Information Processing Systems 7},
% pp.\ 609--616. Cambridge, MA: MIT Press.

% [2] Bower, J.M.\ \& Beeman, D.\ (1995) {\it The Book of GENESIS: Exploring
%   Realistic Neural Models with the GEneral NEural SImulation System.}  New York:
% TELOS/Springer--Verlag.

% [3] Hasselmo, M.E., Schnell, E.\ \& Barkai, E.\ (1995) Dynamics of learning and
% recall at excitatory recurrent synapses and cholinergic modulation in rat
% hippocampal region CA3. {\it Journal of Neuroscience} {\bf 15}(7):5249-5262.
}

\newpage

\appendix

\section*{Appendix / supplemental material}

\section{Proofs}
\label{app-all-proof}
We provide a proof for the upper bound of value distributions and derivations of loss functions. See Table~\ref{app-notations} for all notations.

\subsection{Proof for Value Bound}
\label{app-proof}

To prove an upper bound of the value distribution, we introduce an assumption from \cite{riql} below.
\begin{assumption}~\cite{riql}~\label{assumption-corrupt_level}
    Let $\zeta=\sum_{i=1}^N\left(2 \zeta_i+\log \zeta_i^{\prime}\right)$ denote the cumulative corruption level, where $\zeta_i$ and $\zeta_i^{\prime}$ are defined as
    \begin{equation}
        \left\|\mathcal{T} V\left(s_i\right)-\tilde{\mathcal{T}} V\left(s_i\right)\right\|_{\infty} \leq \zeta_i, \quad \max \left\{\frac{\pi_{\mathcal{B}}\left(a \mid s_i\right)}{\pi_b\left(a \mid s_i\right)}, \frac{\pi_b\left(a \mid s_i\right)}{\pi_{\mathcal{B}}\left(a \mid s_i\right)}\right\} \leq \zeta_i^{\prime}, \quad \forall a \in \mathcal{A} . \nonumber
    \end{equation}
    Here $\|\cdot\|_{\infty}$ means taking supremum over $V: \mathcal{S} \mapsto\left[0, r_{\max } /(1-\gamma)\right]$.
\end{assumption}
Note that $\{ \zeta_i^{\prime} \}_i=1^N$ can quantify the corruption level of states and actions, and $\{ \zeta_i \}_i=1^N$ can quantify the corruption level of rewards and next states (transition dynamics).

% Then, to bound the value difference between the policies under uncorrupted and corrupted data, we provide the following assumption.
Then, we provide the following assumption.
\begin{assumption}~\label{assumption-corrupt_coverage}
    There exists an $M > 0$ such that
    $$\max\{\frac{d^{\pi_E}(s,a)}{p_b(s,a)},\frac{d^{\tilde{\pi}_E}(s,a)}{p_\mathcal{B}(s,a)},\frac{d^{\pi_E}(s)}{d^{\tilde{\pi}_E}(s)},\frac{d^{\pi_\text{IQL}}(s)}{d^{\pi_E}(s)},\frac{d^{\tilde{\pi}_\text{IQL}}(s)}{d^{\tilde{\pi}_E}(s)}\}\leq M,\quad \forall (s, a) \in \mathcal{S \times A},$$
    where $\pi_{\mathrm{E}}(a \mid s) \propto \pi_{b}(a \mid s) \cdot \exp \left(\beta \cdot\left[\mathcal{T} Q^{*}-V^{*}\right](s, a)\right)$ is the policy of clean data, $\pi_{\textbf{IQL}}$ in Equation~\eqref{eq-pi_iql} is the policy following IQL's supervised policy learning scheme under clean data, $\tilde{\pi}_{\mathrm{IQL}}=\underset{\pi}{\arg \min } \mathbb{E}_{s \sim {\mathcal{B}}}\left[\mathrm{KL}\left(\tilde{\pi}_{\mathrm{E}}(\cdot \mid s), \pi(\cdot \mid s)\right)\right]$is the policy under corrupted data, and $\tilde{\pi}_{\mathrm{E}}(a \mid s) \propto \pi_{{\mathcal{B}}}(a \mid s) \cdot \exp \left(\beta \cdot\left[\tilde{\mathcal{T}} Q^{*}-V^{*}\right](s, a)\right)$ is the policy of corrupted data.
\end{assumption}

\begin{align}\label{eq-pi_iql}
\pi_{\mathrm{IQL}} & =\underset{\pi}{\arg \max } \,\,\mathbb{E}_{(s, a) \sim b}\left[\exp \left(\beta \cdot\left[\mathcal{T} Q^{*}-V^{*}\right](s, a)\right) \log \pi(a \mid s)\right] \\
& =\underset{\pi}{\arg \min }\,\, \mathbb{E}_{s \sim b}\left[\mathcal{D}_\mathrm{KL}\left(\pi_{\mathrm{E}}(\cdot \mid s), \pi(\cdot \mid s)\right)\right].
\end{align}

% \begin{equation}\label{eq-cor-pi_iql}
% \tilde{\pi}_{\mathrm{IQL}}=\underset{\pi}{\arg \min } \mathbb{E}_{s \sim \tilde{\mathcal{B}}}\left[\mathrm{KL}\left(\tilde{\pi}_{\mathrm{E}}(\cdot \mid s), \pi(\cdot \mid s)\right)\right].
% \end{equation}

As TRACER directly applies the weighted imitation learning technique from IQL to learn the policy, we can use $\tilde{\pi}_{\text{IQL}}$ as the policy learned by TRACER under data corruptions, akin to RIQL.
Assumption~\ref{assumption-corrupt_coverage} requires that each pair, including the policy $\pi_E$ and the clean data $b$, the policy $\tilde{\pi}_E$ and the corrupted dataset $\mathcal{B}$, $\pi_E$ and $\tilde{\pi}_E$, $\pi_E$ and $\pi_\text{IQL}$, and $\tilde{\pi}_E$ and $\tilde{\pi}_\text{IQL}$, has good coverage. It is similar to the coverage condition in \cite{riql}.
Based on Assumptions~\ref{assumption-corrupt_level} and \ref{assumption-corrupt_coverage}, we can derive the following theorem to show the robustness of our approach using the supervised policy learning scheme and learning the action-value distribution.

\begin{lemma}\label{lemma6} (Performance Difference) For any $\tilde{\pi}$ and $\pi$, we have
\begin{equation}
    Z^{\tilde{\pi}}(s) - Z^{\pi}(s) = \frac{1}{1-\gamma} \mathbb{E}_{(s,a)\sim d^{\tilde{\pi}},\tilde{\pi}} \left[D^{\pi}(s,a,r)-Z^{\pi}(s)\right].
\end{equation}
\end{lemma}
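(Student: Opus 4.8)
The plan is to prove this by the standard telescoping argument underlying the performance difference lemma, adapted to the value-distribution notation. Since the right-hand side is an expectation producing a scalar, I read the identity at the level of expected returns, where $Z^\pi(s)=\mathbb{E}[\sum_{t}\gamma^t R(s_t,a_t)\mid s_0=s]$ coincides with the value $V^\pi(s)$, and $D^\pi(s,a,r)$ plays the role of the action value $Q^\pi(s,a)$ consistent with the distributional Bellman relation in Equation~\eqref{eq-d}. The quantity $D^\pi(s,a,r)-Z^\pi(s)$ is then the distributional analogue of the advantage $A^\pi(s,a)=Q^\pi(s,a)-V^\pi(s)$, so the claim is the distributional restatement of the classical performance difference identity.

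First I would unroll $Z^{\tilde\pi}(s)$ as the expected discounted return of a trajectory generated under $\tilde\pi$ from $s$, and insert a telescoping sum of $Z^\pi$ terms, using $Z^\pi(s)=\mathbb{E}_{\tilde\pi}\big[\sum_{t\ge 0}\gamma^t\big(Z^\pi(s_t)-\gamma Z^\pi(s_{t+1})\big)\mid s_0=s\big]$, which holds because the sum telescopes to $Z^\pi(s)$ (the boundary term at infinity vanishes since $\gamma<1$ and the returns are bounded by $r_{\max}/(1-\gamma)$). Subtracting this from the unrolled form of $Z^{\tilde\pi}(s)$ leaves $\mathbb{E}_{\tilde\pi}\big[\sum_{t\ge0}\gamma^t\big(R(s_t,a_t)+\gamma Z^\pi(s_{t+1})-Z^\pi(s_t)\big)\mid s_0=s\big]$.

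Next I would identify the per-step bracket with the advantage term. Taking the inner expectation over the reward and next state under the transition dynamics and invoking the Bellman consistency $\mathbb{E}[R(s,a)+\gamma Z^\pi(s')]=\mathbb{E}[D^\pi(s,a,r)]$, each bracket becomes $D^\pi(s_t,a_t,r_t)-Z^\pi(s_t)$, giving $\mathbb{E}_{\tilde\pi}\big[\sum_{t\ge0}\gamma^t\big(D^\pi(s_t,a_t,r_t)-Z^\pi(s_t)\big)\mid s_0=s\big]$. Finally I would collapse this discounted sum into the normalized discounted state-action visitation distribution: by definition $d^{\tilde\pi}(s,a)=(1-\gamma)\sum_{t\ge0}\gamma^t\Pr_{\tilde\pi}(s_t=s,a_t=a)$, so that $\sum_{t\ge0}\gamma^t\mathbb{E}[\,\cdot\,]=\frac{1}{1-\gamma}\mathbb{E}_{(s,a)\sim d^{\tilde\pi}}[\,\cdot\,]$, which produces exactly the claimed $\tfrac{1}{1-\gamma}$ prefactor and the stated expectation over $(s,a)\sim d^{\tilde\pi},\tilde\pi$.

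The main obstacle I anticipate is bookkeeping the distributional objects correctly: the identity is an equality of scalars (expected returns), so I must be explicit that the telescoping and the Bellman substitution are performed after taking expectations over $R$ and the transitions, and that $D^\pi(s,a,r)$ enters only through its consistency with $R+\gamma Z^\pi(s')$ rather than as a raw random variable. I also need to verify that the normalization convention for $d^{\tilde\pi}$ matches the $\tfrac{1}{1-\gamma}$ factor, and that interchanging the infinite discounted sum with the expectation is justified by the boundedness of rewards.
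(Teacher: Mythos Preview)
Your proposal is correct and follows essentially the same approach as the paper: unroll $Z^{\tilde\pi}(s)$ as an expected discounted return, telescope in $Z^\pi$ along the $\tilde\pi$-trajectory, invoke the Bellman identity to replace $R+\gamma Z^\pi(s')$ by $D^\pi$, and then rewrite the discounted sum as $\tfrac{1}{1-\gamma}$ times an expectation under $d^{\tilde\pi}$. The paper's proof phrases the telescoping as adding and subtracting $Z^\pi(S_t)$ inside the sum rather than writing $Z^\pi(s)$ as a separate telescoping sum, but the two presentations are the same argument; your additional remarks about boundedness and the normalization of $d^{\tilde\pi}$ are sound bookkeeping that the paper leaves implicit.
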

\begin{proof}
    Based on \cite{lemma6}, for any $\tilde{\pi}$ and $\pi$, we have
    \begin{align}
        Z^{\tilde{\pi}}(s) &= \sum_{t=0}^\infty \gamma^t \mathbb{E}_{(S_t,A_t)\sim P,\tilde{\pi}} \left[R(S_t, A_t) | S_0 = s\right] \\
        &= \sum_{t=0}^\infty \gamma^t \mathbb{E}_{(S_t,A_t)\sim P,\tilde{\pi}} \left[R(S_t, A_t)+Z^{\pi}(S_t)-Z^{\pi}(S_t) | S_0 = s\right]\\
        &= \sum_{t=0}^\infty \gamma^t \mathbb{E}_{(S_t,A_t,S_{t+1})\sim P,\tilde{\pi}} \left[R(S_t, A_t)+\gamma Z^{\pi}(S_{t+1})-Z^{\pi}(S_t) | S_0 = s\right]+Z^{\pi}(s)\\
        &= Z^{\pi}(s)+\sum_{t=0}^\infty \gamma^t \mathbb{E}_{(S_t,A_t)\sim P,\tilde{\pi}} \left[D^{\pi}(S_t,A_t,R_t)-Z^{\pi}(S_t) | S_0 = s\right]\\
        &= Z^{\pi}(s)+ \frac{1}{1-\gamma} \mathbb{E}_{(s,a)\sim d^{\tilde{\pi}},\tilde{\pi}} \left[D^{\pi}(s,a,r)-Z^{\pi}(s)\right].
    \end{align}
\end{proof}

\begin{theorem}\label{theorem-value-bound} (Robustness). Under Assumptions~\ref{assumption-corrupt_level} and \ref{assumption-corrupt_coverage}, we have 
    \begin{equation}
        W_1(q^{\tilde\pi},q^{\pi})\leq \frac{2 M r_{\max}}{(1-\gamma)^2}(\sqrt{1-e^{-\frac{M\zeta}{N}}}+\sqrt{1-e^{-\epsilon_1}}+\sqrt{1-e^{-\epsilon_2}}),
    \end{equation}
    where $q^\pi$ is the value distribution, $W_1(\cdot, \cdot)$ is the Wasserstein distance~\cite{wasserstein} for measuring the distribution difference, $\epsilon_1 = \mathbb{E}_{s \sim b}\left[\mathcal{D}_{\text{KL}}\left(\pi_{\text{E}}(\cdot | s)\, \| \,\pi_{\text{IQL}}(\cdot | s)\right)\right]$, and $\epsilon_2=\mathbb{E}_{s \sim \mathcal{B}}\left[\mathcal{D}_\text{KL}\left(\tilde{\pi}_{\text{E}}(\cdot | s)\, \| \,\tilde{\pi}_{\text{IQL}}(\cdot | s)\right)\right]$.
\end{theorem}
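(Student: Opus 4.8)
The plan is to bound $W_1(q^{\tilde\pi_{\mathrm{IQL}}}, q^{\pi_{\mathrm{IQL}}})$ (with $\pi = \pi_{\mathrm{IQL}}$ and $\tilde\pi = \tilde\pi_{\mathrm{IQL}}$) by inserting the two ``expert'' policies $\pi_E$ and $\tilde\pi_E$ and applying the triangle inequality for the Wasserstein distance:
\begin{equation}
W_1(q^{\tilde\pi_{\mathrm{IQL}}}, q^{\pi_{\mathrm{IQL}}}) \le W_1(q^{\pi_{\mathrm{IQL}}}, q^{\pi_E}) + W_1(q^{\pi_E}, q^{\tilde\pi_E}) + W_1(q^{\tilde\pi_E}, q^{\tilde\pi_{\mathrm{IQL}}}).
\end{equation}
The three summands will be matched, respectively, to $\sqrt{1 - e^{-\epsilon_1}}$, $\sqrt{1 - e^{-M\zeta/N}}$, and $\sqrt{1 - e^{-\epsilon_2}}$, each carrying the common prefactor $\tfrac{2Mr_{\max}}{(1-\gamma)^2}$.

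Second, for any two policies $\pi, \pi'$ I would reduce $W_1(q^{\pi}, q^{\pi'})$ to the total-variation distance between the policies. Lifting Lemma~\ref{lemma6} to the distributional setting through a coupling of the two return processes, the gap is governed by $\mathbb{E}_{s\sim d^{\pi'}}\big[\int (\pi'(a|s)-\pi(a|s))\, D^{\pi}(s,a,r)\,da\big]$. Since returns lie in $[0, r_{\max}/(1-\gamma)]$, the integrand is bounded by $\tfrac{r_{\max}}{1-\gamma}\|\pi'(\cdot|s)-\pi(\cdot|s)\|_1 = \tfrac{2r_{\max}}{1-\gamma}\,\mathrm{TV}(\pi', \pi)$; the horizon factor $\tfrac{1}{1-\gamma}$ from Lemma~\ref{lemma6} and the change of measure from the occupancy $d^{\pi'}$ to the data distribution via the coverage constant $M$ of Assumption~\ref{assumption-corrupt_coverage} then yield
\begin{equation}
W_1(q^{\pi}, q^{\pi'}) \le \tfrac{2Mr_{\max}}{(1-\gamma)^2}\, \mathbb{E}_{s}\!\left[\mathrm{TV}\big(\pi(\cdot|s), \pi'(\cdot|s)\big)\right].
\end{equation}

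Third, I would convert each policy total-variation term into the stated square-root-KL form using the Bretagnolle--Huber inequality $\mathrm{TV}(P,Q) \le \sqrt{1 - e^{-\mathcal{D}_{\mathrm{KL}}(P\|Q)}}$, together with Jensen's inequality to pass the expectation over $s$ inside the concave root. For the first and third summands this is immediate from the definitions of $\epsilon_1 = \mathbb{E}_{s\sim b}[\mathcal{D}_{\mathrm{KL}}(\pi_E\|\pi_{\mathrm{IQL}})]$ and $\epsilon_2 = \mathbb{E}_{s\sim\mathcal{B}}[\mathcal{D}_{\mathrm{KL}}(\tilde\pi_E\|\tilde\pi_{\mathrm{IQL}})]$. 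For the middle summand I would bound $\mathbb{E}_s[\mathcal{D}_{\mathrm{KL}}(\pi_E\|\tilde\pi_E)]$ by $M\zeta/N$: writing out the Boltzmann forms gives $\log\tfrac{\pi_E(a|s)}{\tilde\pi_E(a|s)} = \log\tfrac{\pi_b(a|s)}{\pi_{\mathcal{B}}(a|s)} + \beta\big[(\mathcal{T}Q^*-\tilde{\mathcal{T}}Q^*)(s,a)\big] + (\text{log-partition terms})$, and Assumption~\ref{assumption-corrupt_level} controls the first term by $\log\zeta_i'$ and the second by $2\zeta_i$ (via $\|\mathcal{T}V-\tilde{\mathcal{T}}V\|_\infty\le\zeta_i$), so that summing over the $N$ samples and invoking the coverage constant recovers $\zeta = \sum_i(2\zeta_i + \log\zeta_i')$ scaled by $M/N$.

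The main obstacle I expect is the second step: making the distributional performance-difference argument rigorous so that the scalar identity of Lemma~\ref{lemma6} genuinely controls the Wasserstein-1 distance between the return distributions $q^\pi$ and $q^{\pi'}$, rather than only the gap between their means, while simultaneously tracking the correct constant $\tfrac{2Mr_{\max}}{(1-\gamma)^2}$ through the coupling and the change of measure. The KL bound for the middle term is the other delicate point, since it requires handling the log-partition/normalization terms of the two Boltzmann policies and folding the corruption-level quantities $\zeta_i, \zeta_i'$ into exactly the combination defining $\zeta$; by contrast, the triangle inequality and the Bretagnolle--Huber conversion are routine.
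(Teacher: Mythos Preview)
Your proposal is correct and follows essentially the same route as the paper's proof: the same triangle-inequality split through $\pi_E$ and $\tilde\pi_E$, the reduction of each $W_1$ term to a policy $\ell_1$/TV gap via the performance-difference identity of Lemma~\ref{lemma6} plus the coverage constant $M$, and then Bretagnolle--Huber followed by Jensen (applied twice: once for the concave $\sqrt{\cdot}$ and once for the convex $x\mapsto 1-e^{-x}$) to reach the $\sqrt{1-e^{-(\cdot)}}$ form; the middle KL is bounded exactly as you sketch by expanding the Boltzmann ratio and invoking Assumption~\ref{assumption-corrupt_level}. Your flagged obstacle---that Lemma~\ref{lemma6} is a scalar identity and one must argue it actually controls $W_1$ between the return laws $q^\pi$---is precisely the soft spot in the paper's own argument, which handles it by viewing $q^\pi$ as the law of $Z^\pi(S)$ with $S\sim d^\pi$ and coupling through the state marginals.
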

% We provide the detailed proof in Appendix~\ref{app-proof}.
\begin{proof}
    The definition of Wasserstein metric is $W_p(P,Q)=\left(\underset{\gamma \in \Gamma(P,Q)}{\inf}\int_x\int_y||x-y||_p^{p}d\gamma(x,y)\right)^{\frac{1}{p}}$. 
    The value distribution $q^\pi$, also denoted by $p(\cdot|s)$ in Section~\ref{algo-impl} of the main text, is an expactation of action value distribution $\mathbb{E}_{a\sim \pi, r\sim \rho} [p_\theta(\cdot|s,a,r)]$.

    Note that as \Method{} directly applies the weighted imitation learning technique from IQL to learn the policy (see Section~\ref{algo-impl}), we can use $\tilde{\pi}_{\text{IQL}}$ as the policy learned by \Method{} under data corruptions, akin to RIQL.

    Let $q^{\tilde{\pi}_{IQL}}$ and $q^{\pi_{IQL}}$ be the value distributions of learned policies following IQL's supervised policy learning scheme under corrupted and clean data, respectively. $Z^{\tilde{\pi}_{IQL}}\sim q^{\tilde{\pi}_{IQL}}$ and $Z^{\pi_{IQL}}\sim q^{\pi_{IQL}}$.
    Let $p^{\pi_E}$ be the value distribution of the policy of clean data, $p^{\tilde{\pi}_E}$ be the value distribution of the policy of corrupted data, $Z^{\pi_E}\sim p^{\pi_E}$, and
    $Z^{\tilde{\pi}_E}\sim p^{\tilde{\pi}_E}$.
    We have
    \begin{align}
        W_1(q^{\tilde{\pi}_{IQL}},q^{\pi_{IQL}}) &\leq W_1(q^{\tilde{\pi}_{IQL}},p^{\tilde{\pi}_E})+W_1(p^{\tilde{\pi}_E},q^{\pi_{IQL}}) \\
        &\leq W_1(q^{\tilde{\pi}_{IQL}},p^{\tilde{\pi}_E})+W_1(p^{\tilde{\pi}_E},p^{\pi_E})+W_1(p^{\pi_E},q^{\pi_{IQL}})
    \end{align}
    
    Moreover, based on the Bretagnolle–Huber inequality, we have $d_{TV}(P,Q)\leq \sqrt{1-\exp(-\mathcal{D}_{KL}(P||Q))}$.
    For $W_1(p^{\tilde{\pi}_E},p^{\pi_E})$, we have
    \begin{align}
        W_1(p^{\tilde{\pi}_E},p^{\pi_E}) &= \underset{\gamma \in \Gamma(p^{\tilde{\pi}_E},p^{\pi_E})}{\inf}\int_{Z^{\tilde{\pi}_E}}\int_{Z^{\pi_E}}||Z^{\tilde{\pi}_E}-Z^{\pi_E}||d\gamma(Z^{\tilde{\pi}_E},Z^{\pi_E})\\
        &= \underset{\gamma \in \Gamma(d^{\tilde{\pi}_E},d^{\pi_E})}{\inf}\int_{s}\int_{s'}||Z^{\tilde{\pi}_E}(s)-Z^{\pi_E}(s')||d\gamma(s,s')\\
        &\leq M*\int_{s}|Z^{\tilde{\pi}_E}(s)-Z^{\pi_E}(s)|d^{\tilde{\pi}_E}(s)ds \\
        &= M*\mathbb{E}_{s\sim d^{\tilde{\pi}_E}}|Z^{\tilde{\pi}_E}(s)-Z^{\pi_E}(s)| \\
        &= M*\frac{1}{1-\gamma}\mathbb{E}_{s\sim d^{\tilde{\pi}_E},a\sim {\tilde{\pi}_E}(\cdot|s)}[|Z^{\pi_E}(s)-D^{\pi_E}(s,a,r)|] \\
        &\overset{\ref{lemma6}}{=} M*\frac{1}{1-\gamma}\mathbb{E}_{s\sim d^{\tilde{\pi}_E}}[\mathbb{E}_{a\sim {\pi_E}(\cdot|s)}[D^{\pi_E}(s,a,r)]-\mathbb{E}_{a\sim {\tilde{\pi}_E}(\cdot|s)}[D^{\pi_E}(s,a,r)]]\\
        &\leq \frac{Mr_{\max}}{(1-\gamma)^2}\mathbb{E}_{s\sim d^{\tilde{\pi}_E}}||\tilde{\pi}_E(\cdot|s)-\pi_E(\cdot|s)||_1\\
        &\leq \frac{2Mr_{\max}}{(1-\gamma)^2}\mathbb{E}_{s\sim d^{\tilde{\pi}_E}}\sqrt{1-e^{-\mathcal{D}_{\text{KL}}(\tilde{\pi}_E(\cdot|s),\pi_E(\cdot|s))}} \\
        &\leq \frac{2Mr_{\max}}{(1-\gamma)^2}\sqrt{\mathbb{E}_{s\sim d^{\tilde{\pi}_E}}(1-e^{-\mathcal{D}_{\text{KL}}(\tilde{\pi}_E(\cdot|s),\pi_E(\cdot|s))})} \\
        &\leq \frac{2Mr_{\max}}{(1-\gamma)^2}\sqrt{1-e^{-E_{s\sim d^{\tilde{\pi}_E}}~\mathcal{D}_{\text{KL}}(\tilde{\pi}_E(\cdot|s),\pi_E(\cdot|s))}} \\
        &= \frac{2Mr_{\max}}{(1-\gamma)^2}\sqrt{1-e^{-\mathbb{E}_{(s,a)\sim d^{\tilde{\pi}_E}}[\log\frac{{\tilde{\pi}_E}(a|s)}{{\pi_E}(a|s)}]}}.
    \end{align}
    
    Moreover, we have
    \begin{align}
        \frac{\tilde{\pi}_{\mathrm{E}}(a_i|s_i)}{{\pi}_{\mathrm{E}}(a_i|s_i)} &= \frac{\tilde{\pi}_{\mathcal{B}}(a_i|s_i) \cdot \exp(\beta\cdot [\tilde{\mathcal{T}}V^* - V^*](s,a))}{{\pi}_{\mathcal{B}}(a_i|s_i) \cdot \exp(\beta\cdot [\tilde{\mathcal{T}}V^* - V^*](s,a))} \\
        &\quad \times \frac{\sum_{a\in\mathcal{A}} \tilde{\pi}_{\mathcal{B}}(a_i|s_i) \cdot \exp(\beta\cdot [\tilde{\mathcal{T}}V^* - V^*](s,a))}{\sum_{a\in\mathcal{A}} {\pi}_{\mathcal{B}}(a_i|s_i) \cdot \exp(\beta\cdot [\tilde{\mathcal{T}}V^* - V^*](s,a))}.
    \end{align}
    
    By the definition of corruption levels, we have
    \begin{align}
        \frac{\tilde{\pi}_{\mathrm{E}}(a_i|s_i)}{{\pi}_{\mathrm{E}}(a_i|s_i)} & \leq \zeta^{'}_i \cdot \exp(\beta \zeta_i) \cdot \frac{\zeta_i^{'} \cdot \exp(\beta \zeta_i) \sum_{a\in\mathcal{A}}\tilde{\pi}_{\mathcal{B}}\cdot \exp(\beta\cdot [\tilde{\mathcal{T}}V^* - V^*](s,a))}{\tilde{\pi}_{\mathcal{B}}\cdot \exp(\beta\cdot [\tilde{\mathcal{T}}V^* - V^*](s,a))} \\
        & = {\zeta_i^{'}}^2 \cdot \exp(2\beta\zeta_i).
    \end{align}
    
    Thus, we can derive 
    \begin{equation}
        W_1(p^{\tilde{\pi}_E},p^{\pi_E})\leq \frac{2Mr_{\max}}{(1-\gamma)^2}\sqrt{1-e^{-\frac{M\zeta}{N}}}.
    \end{equation}

    Then, for $W_1(p^{\pi_E},q^{\pi_{IQL}})$, we have
    \begin{equation}
        W_1(p^{\pi_E},q^{\pi_{IQL}}) \leq \frac{Mr_{\max}}{(1-\gamma)^2}\mathbb{E}_{s\sim d^{{\pi}_E}}||{\pi}_E(\cdot|s)-\pi_{IQL}(\cdot|s)||_1 \leq \frac{2Mr_{\max}}{(1-\gamma)^2} \sqrt{1-e^{-\epsilon_1}},
    \end{equation}
    where $\epsilon_1 = \mathbb{E}_{s \sim b}\left[\mathcal{D}_{\text{KL}}\left(\pi_{\text{E}}(\cdot | s)\, \| \,\pi_{\text{IQL}}(\cdot | s)\right)\right]$.
    
    Finally, for $W_1(q^{\tilde{\pi}_{IQL}},p^{\tilde{\pi}_E})$, we have
    \begin{equation}
        W_1(q^{\tilde{\pi}_{IQL}},p^{\tilde{\pi}_E}) \leq \frac{Mr_{\max}}{(1-\gamma)^2}\mathbb{E}_{s\sim d^{\tilde{\pi}_E}}||\tilde{\pi}_E(\cdot|s)-\tilde{\pi}_{IQL}(\cdot|s)||_1 \leq \frac{2Mr_{\max}}{(1-\gamma)^2} \sqrt{1-e^{-\epsilon_2}}.
    \end{equation}
    
    Therefore, we have
    \begin{equation}
        W_1(q^{\tilde{\pi}},q^{\pi})\leq \frac{2Mr_{\max}}{(1-\gamma)^2}(\sqrt{1-e^{-\frac{M\zeta}{N}}}+\sqrt{1-e^{-\epsilon_1}}+\sqrt{1-e^{-\epsilon_2}}).
    \end{equation}
\end{proof}
Note that in Theorem~\ref{theorem-value-bound}, the major difference between TRACER and IQL/RIQL is that TRACER uses the action-value and value distributions rather than the action-value and value functions in IQL/RIQL. Therefore, we provide this theorem to prove an upper bound on the difference in value distributions of TRACER to show its robustness, which also provides a guarantee of how TRACER's performance degrades with increased data corruptions.

\subsection{Derivation of Loss Functions}
\label{app-loss}

For Equations~\eqref{eq-d_a}, \eqref{eq-d_r}, and \eqref{eq-d_s}, we provide the detailed derivations.

Firstly, for Equation~\eqref{eq-d_a}, we maximize the posterior and have
\begin{align}
\max \log p(D) &= \max \log \mathbb{E}_{S_t,R_t} \left[p(D|S_t,R_t)\right] \nonumber\\
&\geq \max \mathbb{E}_{S_t,R_t} \left[\log p(D|S_t,R_t)\right] \nonumber\\
& = \max \mathbb{E}_{S_t,R_t} \left[\int_{\mathcal{P(A)}} p_{\psi_a}(A_t|D,S_t,S_{t+1}) \cdot \log p(D|S_t,R_t) dA_t\right].
\end{align}

\begin{align}
\log p(D|S_t,R_t) &= \int_{\mathcal{P(A)}} p_{\psi_a}(A_t|D,S_t,S_{t+1}) \cdot \log \frac{p(D,A_t|S_t,R_t,S_{t+1}) \cdot p_{\psi_a}(A_t|D,S_t,S_{t+1})}{p_{\psi_a}(A_t|D,S_t,S_{t+1}) \cdot p(A_t|D,S_t,S_{t+1})} dA_t \nonumber \\
&= \int_{\mathcal{P(A)}} p_{\psi_a}(A_t|D,S_t,S_{t+1}) \cdot \log \frac{p(D,A_t|S_t,R_t)}{p_{\psi_a}(A_t|D,S_t,S_{t+1})} dA_t \nonumber\\
&\,\,\,\,\,\,\,\,\,\,\,\,\,\,\,\,\,\,\,\,\,\,\,\,\,\,\,\,\,\,\,\,\,\,\,\,\,\,\,\,\,\,\,\,\,\,\,\,\,\,+ \int_{\mathcal{P(A)}} p_{\psi_a}(A_t|D,S_t,S_{t+1}) \cdot \log \frac{p_{\psi_a}(A_t|D,S_t,S_{t+1})}{p(A_t|D,S_t,S_{t+1})} dA_t \nonumber\\
&= \int_{\mathcal{P(A)}} p_{\psi_a}(A_t|D,S_t,S_{t+1}) \cdot \log \frac{p(D,A_t|S_t,R_t)}{p_{\psi_a}(A_t|D,S_t,S_{t+1})} dA_t \nonumber\\
&\quad\quad \quad\quad\quad\quad\quad\quad\quad +  \mathcal{D}_{\text{KL}}\left(p_{\psi_a}(A_t|D,S_t,S_{t+1}) \| p(A_t|D,S_t,S_{t+1}) \right) \nonumber\\
&= L_b + \mathcal{D}_{\text{KL}}\left(p_{\psi_a}(A_t|D,S_t,S_{t+1}) \| p(A_t|D,S_t,S_{t+1}) \right) \\
&\geq L_b.
\end{align}

Note that $L_b$ is
\begin{align}
L_b &= \int_{\mathcal{P(A)}} p_{\psi_a}(A_t|D,S_t,S_{t+1}) \cdot \log \frac{p(D,A_t|S_t,R_t)}{p_{\psi_a}(A_t|D,S_t,S_{t+1})} dA_t \\
&= \int_{\mathcal{P(A)}} p_{\psi_a}(A_t|D,S_t,S_{t+1}) \log \frac{\pi(A_t|S_t) \cdot p_\theta(D|S_t,A_t,R_t)}{p_{\psi_a}(A_t|D,S_t,S_{t+1})} dA_t\nonumber\\
&= - \mathcal{D}_{\text{KL}}\left( p_{\psi_a}(A_t|D,S_t,S_{t+1}) \| \pi(A_t|S_t) \right) \nonumber \\
&\quad \quad \quad \quad\quad\quad + \int_{\mathcal{P(A)}} p_{\psi_a}(A_t|D,S_t,S_{t+1}) \log p_\theta(D|S_t,A_t,R_t) dA_t. \nonumber
\end{align}

% Therefore, we have the loss as follows.
% \begin{align}
%     \mathcal{L}_{D|A}(\theta, \varphi_a) &= \mathbb{E}_{\mathcal{B},p_\theta} \left[ \mathcal{D}_{\text{KL}}\big( p_{\varphi_a} (A_t|D_\theta,S_t,R_t,S_{t+1})\, \| \,\pi_{\mathcal{B}}(A_t | S_t) \big) \right. \nonumber \\
%     & \quad\quad\quad\quad \left. - \mathbb{E}_{A_t \sim p_{\varphi_a}} \left[ \log p_\theta (D_\theta | S_t, A_t, R_t) \right] \right].
% \end{align}

Secondly, for Equation~\eqref{eq-d_r}, we have
\begin{align}
\max \log p(D) &= \max \log \mathbb{E}_{S_t,A_t} \left[P(D|S_t,A_t)\right] \nonumber\\
& \geq \max \mathbb{E}_{S_t,A_t} [\log P(D|S_t,A_t)] \nonumber\\
&= \max \mathbb{E}_{S_t,A_t} \left[ \int_{\mathcal{P(R)}} p_{\psi_r}(R_t|D,S_t,A_t) \cdot \log p(D|S_t,A_t) dR_t \right] \nonumber \\
&\geq \max \mathbb{E}_{S_t,A_t} \Bigg[ - \mathcal{D}_{\text{KL}}\left( p_{\psi_r}(R_t|D,S_t,A_t) \| p(R_t|S_t,A_t) \right) 
\nonumber \\
&\,\,\,\,\,\,\,\,\,\,\,\,\,\,\,\,\,\,\,\,\,\,\,\,\,\,\,\,\,\,\,\,\,\,\,\,\,\,\,+ \int_{\mathcal{P(R)}} p_{\psi_r}(R_t|D,S_t,A_t) \log p(D|S_t,A_{t},R_t) dR_t\Bigg].
\end{align}

Finally, for Equation~\eqref{eq-d_s}, we have
\begin{align}
\max \log p(D) &= \max \log \mathbb{E}_{A_t,R_t} \left[p(D|A_t,R_t)\right] \nonumber\\
& \geq \max \mathbb{E}_{A_t,R_t} [\log p(D|A_t,R_t)] \nonumber\\
&= \max \mathbb{E}_{A_t,R_t} \left[ \int_{\mathcal{P(S)}} p_{\psi_s}(S_t|D,A_t,R_{t}) \cdot \log p(D|A_t,R_t) dS_t \right] \nonumber \\
&\geq \max \mathbb{E}_{A_t,R_t} \Bigg[ - \mathcal{D}_{\text{KL}}\left( p_{\psi_s}(S_t|D,A_t,R_{t}) \| p(S_t|A_t,R_{t}) \right) 
\nonumber \\
&\,\,\,\,\,\,\,\,\,\,\,\,\,\,\,\,\,\,\,\,\,\,\,\,\,\,\,\,\,\,\,\,\,\,\,\,\,\,\,+ \int_{\mathcal{P(S)}} p_{\psi_s}(S_t|D,A_t,R_t) \log p(D|S_t,A_t,R_t) dS_t\Bigg].
\end{align}

Therefore, we have Equations~\eqref{eq-d_a}, \eqref{eq-d_r}, and \eqref{eq-d_s}.

\subsection{Estimation of Entropy}
\label{app-entro}

We estimate differential entropy following \cite{differential-entropy}. We have differential entropy as follows. Note that we omit the condition in the following equations.
\begin{equation}
    \mathcal{H}(p_\theta(D)) = \mathbb{E}[-p_\theta(D)] = -\int_{\mathbb{R}} p_\theta(D) \log p(D) dx.
\end{equation}
Then, we consider a continuous function $p_\theta$ discretized into bins of size $\Delta$. By the mean-value theorem there exists a action value $D_i$ in each bin such that
\begin{equation}
    p_\theta(D_i)\Delta = \int_{i \Delta}^{(i+1)\Delta} p_\theta(D) dD.
\end{equation}
We can estimate the integral of $p_\theta$ (in the Riemannian sense) by
\begin{equation}
    \int_{-\infty}^{\infty} p_\theta(D) dD = \lim_{\Delta \to 0} \sum_{i=-\infty}^{\infty} p_\theta (D_i)\Delta.
\end{equation}
Thus, we have
\begin{align}
    \mathcal{H}(p_\theta(D)) &= \int_{-\infty}^{\infty} p_\theta(D) \log p(D) dx = \lim_{\Delta \to 0} \left( \mathcal{H}_{\Delta}(p_\theta(D)) + \log \Delta \right) \nonumber \\
    & = - \lim_{\Delta \to 0} \sum_{i=-\infty}^{\infty} p_\theta (D_i)\Delta \log p_\theta (D_i). \label{eq-entropy1}
\end{align}
Based on Equation~\eqref{eq-entropy1}, we can derive Equation~\eqref{eq-entropy}, where $\hat{\varsigma}_n$ denotes the midpoint of $p_\theta(D_i)$, and $\overline{D}_{\theta_i}^{\varsigma_n}$ denotes $\Delta$.

\begin{table}[H]
  \centering
  \caption{Notations used in our proofs.}
  \label{app-notations}
  \resizebox{1\textwidth}{!}{%
    \renewcommand{\arraystretch}{1.2}
    \begin{tabular}{c|l}
    \toprule
    \textbf{Notations} & \textbf{Descriptions} \\
    \midrule
    $\mathbb{\zeta}$ & Cumulative corruption level. \\
    $\mathbb{\zeta}_i$ & Metric quantifying the corruption level of rewards and next states (transition dynamics). \\
    $\mathbb{\zeta}_i^{'}$ & Metric quantifying the corruption level of states and actions. \\
    $\pi_{b}(\cdot|s)$ & The behavior policy that is used to collect clean data. \\
    $\pi_{\mathcal{B}}(\cdot|s)$ & The behavior policy that is used to collect corrupted data. \\
    $\pi_{E}(\cdot|s)$ & The policy that we want to learn under clean data. \\
    $\tilde{\pi}_{E}(\cdot|s)$ & The policy that we are learning under corrupted data. \\
    $\pi_{IQL}(\cdot|s)$ & The learned policy using IQL's weighted imitation learning under clean data. \\
    $\tilde{\pi}_{IQL}(\cdot|s)$ & The learned policy using IQL's weighted imitation learning under corrupted data. \\
    $d^{\pi}(s,a)$ & The probability density function associated with policy $\pi$ at state $s$ and action $a$. \\
    $W_1(p,q)$ & The Wasserstein-1 distance that measures the difference between distributions $p$ and $q$. \\
    $q^{\pi}$ & The value distribution of the policy $\pi$. \\
    $Z^{\pi}(s)$ & The random variable of the value distribution $q^{\pi}(\cdot|s)$. \\
    $\epsilon_1$ & KL divergence between $\pi_{E}$ and $\pi_{\text{IQL}}$, i.e., the standard imitation error under clean data. \\
    $\epsilon_2$ & KL divergence between $\tilde{\pi}_{E}$ and $\tilde{\pi}_{IQL}$, i.e., the standard imitation error under corrupted data. \\
    $\hat{\varsigma}_n$ & The midpoint of the probability density $p_{\theta}$. \\
    \bottomrule
    \end{tabular}%
  }
\end{table}

\begin{figure}[h]
    \centering
    % \vskip -0.3in
    \includegraphics[width=0.95\textwidth]{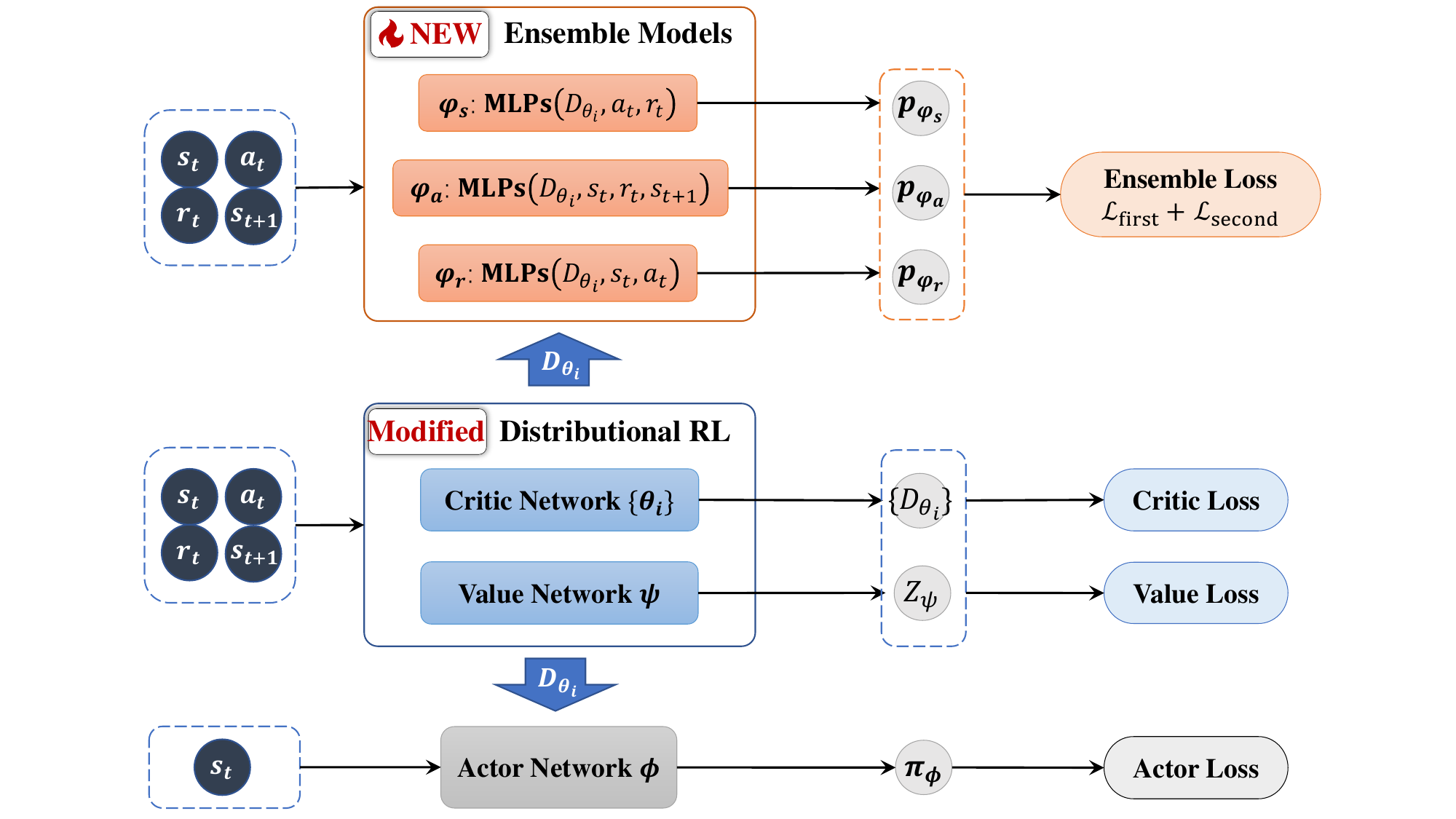}
    \caption{Architecture of \Method{}.}
    \label{figure_G1}
    % \vskip -0.4in
\end{figure}

\section{\Method{} Approach}
\label{app-tracer}

\subsection{Architecture of \Method{}}
\label{app-arch}

On the actor-critic architecture based on IQL, our approach \Method{} adds just one ensemble model $(p_{\varphi_a},p_{\varphi_r},p_{\varphi_s})$ to reconstruct the data distribution and replace the function approximation in the critic with a distribution estimation (see Figure~\ref{figure_G1}). Based on our experiments, this structure significantly improves the ability to handle both simultaneous and individual corruptions.

\subsection{Implementation Details for \Method{}}
\label{app-imple}

\Method{} consists of four components: the actor network, the critic network, the value network, and the observation model.
We first implement the actor network for decision-making with a 3-layer MLP, where the dimension of the hidden layers are 256 and the activation functions are ReLUs. 
Then, we utilize quantile regression~\cite{quantile} to design the critic networks and the value network, approximating the distributions of the action-value function $D$ and value function $Z$, respectively.

Specifically, for the action-value distribution, we use a function $h:[0,1] \xrightarrow{} \mathbb{R}^d$ to compute an embedding for the sampling point $\tau$. Then, we can obtain the corresponding approximation by $D^{\tau}(s,a) \approx f(g(s) \odot h(\tau))_a$. Here, $g: \mathcal{S} \xrightarrow{} \mathbb{R}^d$ is the function computed by MLPs and $f: \mathbb{R}^d \xrightarrow{} \mathbb{R}^{|\mathcal{A}|}$ is the subsequent fully-connected layers mapping $g(s)$ to the estimated action-values $D(s,a) \approx f(g(s))_a$. For the approximation of the value distribution $Z^{\tau}(s)$, we leverage the same architecture.
Note that the function $h$ is implemented by a fully-connected layer, and the output dimension is set to 256.
We then use an ensemble model with $K$ networks. Each network is a 3-layer MLP, consisting of 256 units and ReLU activations. By using the ensemble model, we can construct the critic network, using the quantile regression for optimization.
Moreover, the value network is constructed by a 3-layer MLP. The dimension of the hidden layers are also 256 and activation functions are ReLUs.

The observation model uses an ensemble model with 3 neural networks. Each network is used to reconstruct different observations in the offline dataset. It consists of two fully connected layers, and the activation functions are ReLUs.
We apply a masked matrix during the learning process for updating the observation model. Thus, each network in the observation model can use different input data to compute Equations~\eqref{eq-d_a}, \eqref{eq-d_r}, and \eqref{eq-d_s}, respectively.

Following the setting of~\cite{riql}, we list the hyper-parameters of $N$, $K$, and $\alpha$ for the approxmiation of action-value functions, and $\kappa$ in the Huber loss in \Method{} under random and adversarial corruption in Table~\ref{table_hyper_random} and Table~\ref{table_hyper_adver}, respectively. Here, $N$ is the number of samples $\tau$, and $K$ is the number of ensemble models.

Moreover, we apply Adam optimizer using a learning rate $1\times10^{-3}$, $\gamma=0.99$, target smoothing coefficient $\tau=0.05$, batch size $256$ and the update frequency for the target network is 2.
The corruption rate is $c =0.3$ and the corruption scale is $\epsilon = 1.0$ for all experiments.
For the loss function of observation models, we use a linear decay parameter $\eta$ from $0.0001$ to $0.01$ to trade off the loss $\mathcal{L}_{\text{first}}$ and $\mathcal{L}_{\text{second}}$.
We update each algorithm for 3000 epochs. Each epoch uses 1000 update times following \cite{edac}.
Then, we evaluate each algorithm in clean environments for 100 epochs and report the average normalized return (calculated by $100 \times \frac{\text{score} - \text{random score}}{\text{expert score} - \text{random score}}$ ) over four random seeds.

\begin{table}[H]
    \centering
    \caption{Hyper-parameters used for \Method{} under the random corruption benchmark.}\label{table_hyper_random}
        \begin{tabular}{c|c|cccc}
        \toprule
        \multicolumn{1}{l|}{Environments}  &  Corruption Types & $N$ & $K$ & $\alpha$ & $\kappa$ \\ 
        \midrule
        \multicolumn{1}{l|}{\multirow{5}{*}{Halfcheetah}}  &  observation & $32$ & $5$ & $0.25$ & $0.1$ \\
        \multicolumn{1}{l|}{}                              &  action & $32$ & $5$ & $0.25$ & $0.1$ \\
        \multicolumn{1}{l|}{}                              &  reward & $32$ & $5$ & $0.25$ & $0.1$ \\
        \multicolumn{1}{l|}{}                              &  dynamics & $32$ & $5$ & $0.25$ & $0.1$ \\
        \multicolumn{1}{l|}{}                              &  simultaneous & $32$ & $5$ & $0.25$ & $0.1$ \\
        \midrule
        \multicolumn{1}{l|}{\multirow{5}{*}{Walker2d}}  &  observation & $32$ & $5$ & $0.25$ & $0.1$ \\
        \multicolumn{1}{l|}{}                              &  action & $32$ & $5$ & $0.25$ & $0.1$ \\
        \multicolumn{1}{l|}{}                              &  reward & $32$ & $5$ & $0.5$ & $0.1$ \\
        \multicolumn{1}{l|}{}                              &  dynamics & $32$ & $5$ & $0.25$ & $1.0$ \\
        \multicolumn{1}{l|}{}                              &  simultaneous & $32$ & $5$ & $0.25$ & $0.1$ \\
        \midrule
        \multicolumn{1}{l|}{\multirow{5}{*}{Hopper}}  &  observation & $32$ & $5$ & $0.25$ & $0.1$ \\
        \multicolumn{1}{l|}{}                              &  action & $32$ & $5$ & $0.25$ & $0.1$ \\
        \multicolumn{1}{l|}{}                              &  reward & $32$ & $5$ & $0.5$ & $0.1$ \\
        \multicolumn{1}{l|}{}                              &  dynamics & $32$ & $5$ & $0.25$ & $0.1$ \\
        \multicolumn{1}{l|}{}                              &  simultaneous & $32$ & $5$ & $0.25$ & $0.1$ \\
        \midrule
        \multicolumn{1}{l|}{\multirow{1}{*}{CARLA}}  &  simultaneous & $32$ & $5$ & $0.25$ & $0.1$ \\
        \bottomrule
        \end{tabular}
\end{table}

\begin{table}[H]
    \centering
    \caption{Hyper-parameters used for \Method{} under the adversarial corruption benchmark.}\label{table_hyper_adver}
        \begin{tabular}{c|c|cccc}
        \toprule
        \multicolumn{1}{l|}{Environments}  &  Corruption Types & $N$ & $K$ & $\alpha$ & $\kappa$ \\ 
        \midrule
        \multicolumn{1}{l|}{\multirow{5}{*}{Halfcheetah}}  &  observation & $32$ & $5$ & $0.1$ & $0.1$ \\
        \multicolumn{1}{l|}{}                              &  action & $32$ & $5$ & $0.1$ & $0.1$ \\
        \multicolumn{1}{l|}{}                              &  reward & $32$ & $5$ & $0.1$ & $0.1$ \\
        \multicolumn{1}{l|}{}                              &  dynamics & $32$ & $5$ & $0.1$ & $0.1$ \\
        \multicolumn{1}{l|}{}                              &  simultaneous & $32$ & $5$ & $0.25$ & $0.1$ \\
        \midrule
        \multicolumn{1}{l|}{\multirow{5}{*}{Walker2d}}  &  observation & $32$ & $5$ & $0.25$ & $1.0$ \\
        \multicolumn{1}{l|}{}                              &  action & $32$ & $5$ & $0.1$ & $1.0$ \\
        \multicolumn{1}{l|}{}                              &  reward & $32$ & $5$ & $0.1$ & $0.1$ \\
        \multicolumn{1}{l|}{}                              &  dynamics & $32$ & $5$ & $0.25$ & $0.1$ \\
        \multicolumn{1}{l|}{}                              &  simultaneous & $32$ & $5$ & $0.25$ & $0.1$ \\
        \midrule
        \multicolumn{1}{l|}{\multirow{5}{*}{Hopper}}  &  observation & $32$ & $5$ & $0.25$ & $0.5$ \\
        \multicolumn{1}{l|}{}                              &  action & $32$ & $5$ & $0.25$ & $0.1$ \\
        \multicolumn{1}{l|}{}                              &  reward & $32$ & $5$ & $0.5$ & $0.1$ \\
        \multicolumn{1}{l|}{}                              &  dynamics & $32$ & $5$ & $0.5$ & $0.1$ \\
        \multicolumn{1}{l|}{}                              &  simultaneous & $32$ & $5$ & $0.5$ & $0.001$ \\
        \bottomrule
        \end{tabular}
\end{table}

\section{Detailed Experiments}
\label{app-exp}

\subsection{Details of Data Corruptions}
\label{app-data}
We follow the corruption settings proposed by~\cite{riql}, applying either random or adversarial corruptions to each element of the offline data, namely state, action, reward, and dynamics (or ``next-state''), to simulate potential attacks that may occur during the offline data collection and usage process in real-world scenarios. We begin with the details of random corruption below.

\begin{itemize}
    \item \textbf{Random observation corruption.} We randomly sample $c \%$ transitions $(s, a, r, s^\prime)$ from the offline dataset, and for each of these selected states $s$, we add noise to form $\hat{s} = s + \lambda \cdot std(s)$, where $\lambda \sim Uniform[-\epsilon, \epsilon]^{d_s}$, $c$ is the corruption rate, $\epsilon$ is the corruption scale, $d_s$ refers to the dimension of states and $std(s)$ represents the $d_s$-dimensional standard deviation of all states in the offline dataset. 

    \item \textbf{Random action corruption.} We randomly sample $c \%$ transitions $(s, a, r, s^\prime)$ from the offline dataset, and for each of these selected actions $a$, we add noise to form $\hat{a} = a + \lambda \cdot std(a)$, where $\lambda \sim Uniform[-\epsilon, \epsilon]^{d_a}$, $d_a$ refers to the dimension of actions and $std(a)$ represents the $d_a$-dimensional standard deviation of all actions in the offline dataset. 
    
    \item \textbf{Random reward corruption.} We randomly sample $c \%$ transitions $(s, a, r, s^\prime)$ from the offline dataset, and for each of these selected rewards $r$, we modify it to $\hat{r} = Uniform[-30 \cdot \epsilon, 30 \cdot \epsilon]$. We adopt this harder reward corruption setting since~\cite{survial_instinct} has found that offline RL algorithms are often insensitive to small perturbations of rewards.
    
    \item \textbf{Random dynamics corruption.} We randomly sample $c \%$ transitions $(s, a, r, s^\prime)$ from the offline dataset, and for each of these selected next-step states $s^\prime$, we add noise to form $\hat{s^\prime} = s^\prime + \lambda \cdot std(s^\prime)$, where $\lambda \sim Uniform[-\epsilon, \epsilon]^{d_{s^\prime}}$, $d_{s^\prime}$ refers to the dimension of next-step states and $std(s^\prime)$ represents the $d_{s^\prime}$-dimensional standard deviation of all next states in the offline dataset. 

\end{itemize}  
The harder adversarial corruption settings are detailed as follows:

\begin{itemize}

    \item \textbf{Adversarial observation corruption.} To impose adversarial attack on the offline dataset, we first need to pretrain an EDAC agent with a set of $Q_p$ functions and a policy function $\pi_p$ using clean dataset. Then, we randomly sample $c \%$ transitions $(s, a, r, s^\prime)$ from the offline dataset, and for each of these selected states $s$, we attack it to form $\hat{s} = min_{\hat{s} \in \mathbb{B}_d (s, \epsilon)} Q_p(\hat{s}, a)$. Here, $\mathbb{B}_d (s, \epsilon) = \{ \hat{s} | \lvert \hat{s} - s \rvert \leq \epsilon \cdot std(s) \}$ regularizes the maximum difference for each state dimension. 
    
    \item \textbf{Adversarial action corruption.} We randomly sample $c \%$ transitions $(s, a, r, s^\prime)$ from the offline dataset, and for each of these selected actions $a$, we use the pretrained EDAC agent to attack it to form $\hat{a} = min_{\hat{a} \in \mathbb{B}_d (a, \epsilon)} Q_p(s, \hat{a})$. Here, $\mathbb{B}_d (a, \epsilon) = \{ \hat{a} | \lvert \hat{a} - a \rvert \leq \epsilon \cdot std(a) \}$ regularizes the maximum difference for each action dimension.
    
    \item \textbf{Adversarial reward corruption.} We randomly sample $c \%$ transitions $(s, a, r, s^\prime)$ from the offline dataset, and for each of these selected rewards $r$, we directly attack it to form $\hat{r} = -\epsilon \times r$ without any adversarial model.
    This is because that the objective of adversarial reward corruption is $\hat{r} = \min_{\hat{r} \in \mathbb{B}(r, \epsilon)} \hat{r} + \gamma \mathbb{E}[Q(s', a')]$. Here $\mathbb{B}(r, \epsilon)=\{\hat{r}\mid |\hat{r} - r| \leq (1+\epsilon)\cdot r_{\max} \}$ regularizes the maximum distance for rewards. Therefore, we have $\hat{r} = -\epsilon \times r$.
    
    \item \textbf{Adversarial dynamics corruption.} We randomly sample $c \%$ transitions $(s, a, r, s^\prime)$ from the offline dataset, and for each of these selected next-step states $s^\prime$, we use the pretrained EDAC agent to attack it to form $\hat{s^\prime} = min_{\hat{s^\prime} \in \mathbb{B}_d (s^\prime, \epsilon)} Q_p(\hat{s^\prime}, \pi_p(s^\prime))$. Here, $\mathbb{B}_d (s^\prime, \epsilon) = \{ \hat{s^\prime} | \lvert \hat{s^\prime} - s^\prime \rvert \leq \epsilon \cdot std(s^\prime) \}$ regularizes the maximum difference for each dimension of the dynamics.

\end{itemize}
The optimization of the above adversarial corruptions are implemented through Projected Gradient Descent~\cite{pgd1, pgd2}. Taking adversarial observation corruption for example, we first initialize a learnable vector $z \in [-\epsilon, \epsilon]^{d_s}$, and then conduct a 100-step gradient descent with a step size of 0.01 for $\hat{s} = s + z \cdot std(s)$, and clip each dimension of $z$ within the range $[-\epsilon, \epsilon]$ after each update. For action and dynamics corruption, we conduct similar operation.

Building upon the corruption settings for individual elements as previously discussed, we further intensify the corruption to simulate the challenging conditions that might be encountered in real-world scenarios. We present the details of the simultaneous corruption below:

\begin{itemize}
    \item \textbf{Random simultaneous corruptions.} We sequentially conduct random observation corruption, random action corruption, random reward corruption, and random dynamics corruption to the offline dataset in order. That is to say, we randomly select $c \%$ of the transitions each time and corrupt one element among them, repeating four times until states, actions, rewards, and dynamics of the offline dataset are all corrupted.

    \item \textbf{Adversarial simultaneous corruptions.} We sequentially conduct adversarial observation corruption, adversarial action corruption, adversarial reward corruption, and adversarial dynamics corruption to the offline dataset in order. That is to say, we randomly select $c \%$ of the transitions each time and attack one element among them, repeating four times until states, actions, rewards, and dynamics of the offline dataset are all corrupted.

\end{itemize}

\subsection{Details for CARLA}
\label{app-carla}

We conduct experiments in CARLA from D4RL benchmark. We use the clean environment 'CARLA-Lane-v0' to evaluate IQL, RIQL, and \Method{} (Ours).
We report each mean result with the standard deviation in the left of Figure~\ref{fig-carla} over four random seeds for 3000 epochs.
We apply the random simultaneous data corruptions, where each element in the offline dataset (including states, actions, rewards, and next states) may involve random noise.
We follow the setting in Section~\ref{exp-main}, using the corruption rate $c=0.3$ and scale $\epsilon = 1.0$ in the CARLA task.
The results in the left of Figure~\ref{fig-carla} show the superiority of \Method{} in the random simultaneous corruptions.
We provide the hyperparameters of \Method{} in Table~\ref{table_hyper_random}.

\begin{figure}
  \centering
  \includegraphics[width=0.48\textwidth]{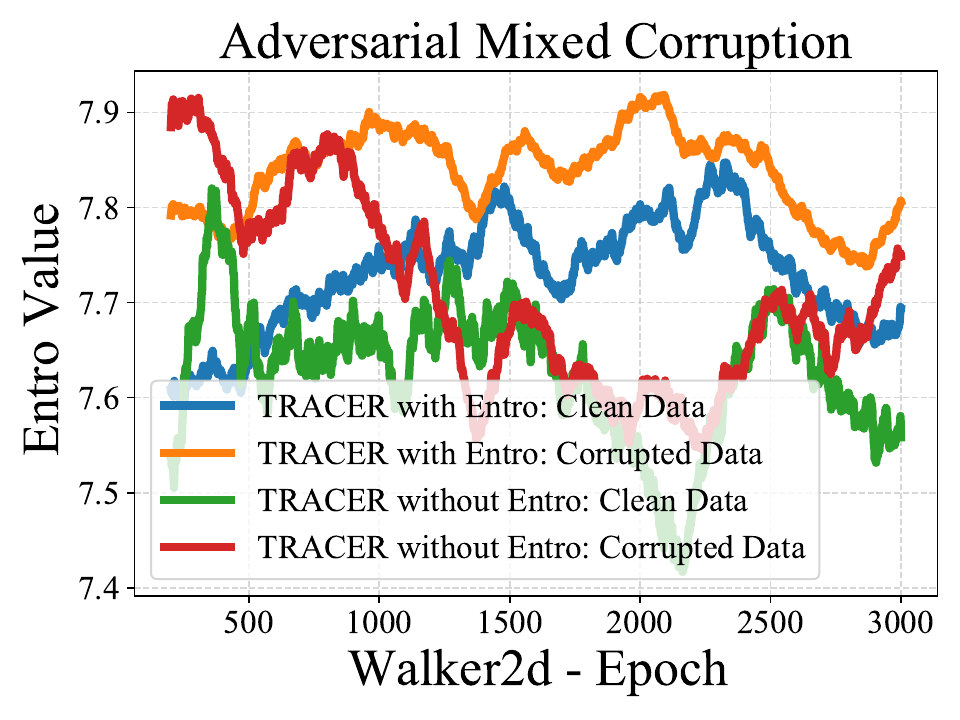}
  \includegraphics[width=0.48\textwidth]{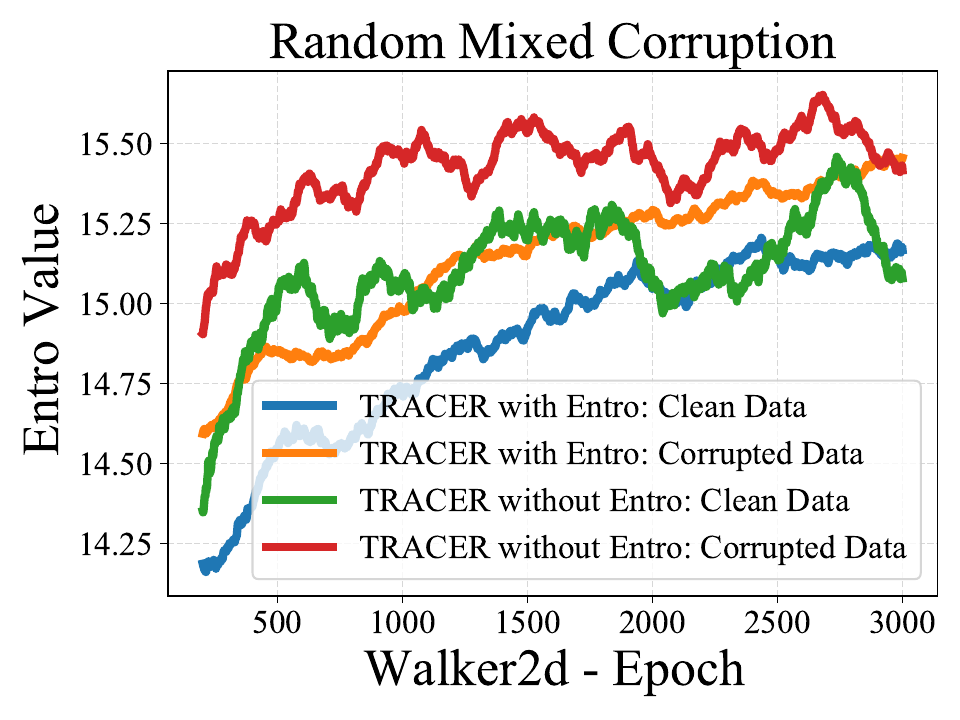}

  \includegraphics[width=0.48\textwidth]{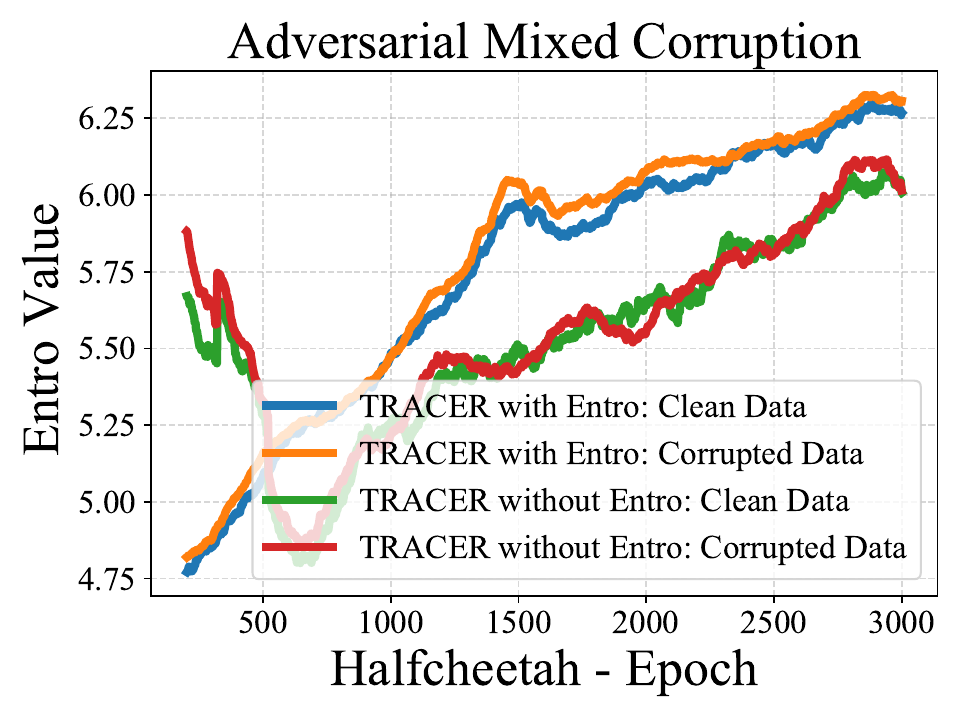}
  \includegraphics[width=0.48\textwidth]{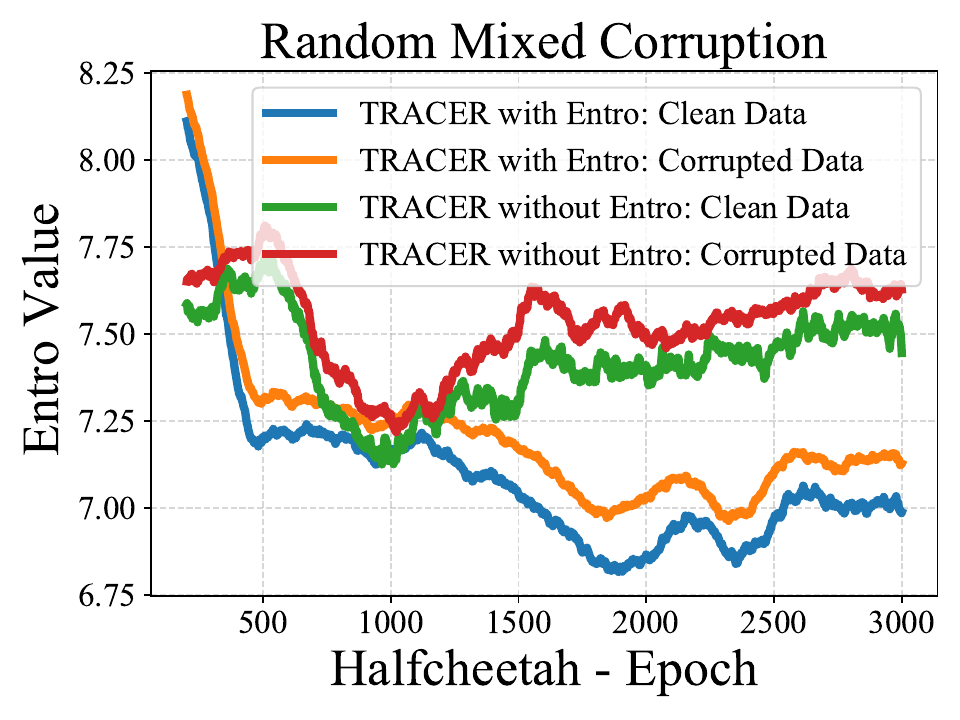}
  \caption{We report the smoothed curves of mean of entropy values for each batch in 'Walker2d-medium-replay-v2' and 'Halfcheetah-medium-replay-v2' under adversarial and random simultaneous data corruptions.}
  % \vspace{-0.51cm}
  \label{trace_entropy-value}
\end{figure}

\subsection{Additional Experiments and Analysis}

\subsubsection{Results Comparison between TRACER and RIQL under Individual Corruptions}\label{app-hyp-tuning}

In Tables~\ref{table_random} and \ref{table_advers} of the main text, we adhered to commonly used settings for individual corruptions in corruption-robust offline RL. We directly followed hyperparameters from RIQL (i.e., $\kappa$ for huber loss, the ensemble number $K$, and $\alpha$ in action-value functions, see Tables \ref{table_hyper_random} and \ref{table_hyper_adver}). Results show that TRACER outperforms RIQL in \textbf{18 out of 24} settings, demonstrating its robustness even when aligned with RIQL’s hyperparameters.

Further, we explore hyperparameter tuning, specifically of $\kappa$, on Hopper task to improve TRACER's performance. This results in TRACER outperforming RIQL in \textbf{7 out of 8} settings on Hopper, up from 5 settings (see Tables~\ref{table_G2} and \ref{table_G3}). The further improvement highlights TRACER’s potential to achieve greater performance gains.

Based on Table~\ref{table_G2}, we find that TRACER requires low $\kappa$ in Huber loss, using L1 loss for large errors. Thus, TRACER can linearly penalize corrupted data and reduce its influence on the overall model fit.

\begin{table}[t]
  \centering
  \caption{Average results and standard errors with 2 seeds and 64 batch sizes in Hopper-medium-replay-v2 task for hyperparameter tuning.}
  \label{table_G2}
  \resizebox{1\textwidth}{!}{%
  \renewcommand{\arraystretch}{0.5}
    \begin{tabular}{c|cccc}
    \toprule
    \multicolumn{1}{l|}{\multirow{2}{*}} & Random Dynamics & Random Dynamics & Random Dynamics  & Random Dynamics \\
    \multicolumn{1}{l|}{}  & $\kappa=0.01$ & $\kappa=0.1$ & $\kappa=0.5$ & $\kappa=1.0$ \\  
    \midrule
    TRACER (Ours) & 52.6 $\pm$ 0.9 & \textcolor{blue}{\textbf{57.5 $\pm$ 13.0}} & 45.1 $\pm$ 10.5 & 44.0 $\pm$ 6.5 \\
    \midrule
    \multicolumn{1}{l|}{\multirow{2}{*}} & Adversarial Reward & Adversarial Reward & Adversarial Reward  & Adversarial Reward \\
    \multicolumn{1}{l|}{}  & $\kappa=0.01$ & $\kappa=0.1$ & $\kappa=0.5$ & $\kappa=1.0$ \\  
    \midrule
    TRACER (Ours) & \textcolor{blue}{\textbf{82.2 $\pm$ 0.1}} & 61.3 $\pm$ 0.8 & 56.7 $\pm$ 1.6 & 51.3 $\pm$ 2.1 \\
    \bottomrule
    \end{tabular}%
  }
\end{table}

\begin{table}[t]
  \centering
  \caption{Average results and standard errors with 2 seeds and 64 batch sizes in Hopper-medium-replay-v2 task under individual corruptions.}
  \label{table_G3}
  \resizebox{1\textwidth}{!}{%
    \renewcommand{\arraystretch}{0.5}
    \begin{tabular}{c|cccc}
    \toprule
    & Random Observation & Random Action & Random Reward & Random Dynamics \\
    \midrule
    RIQL & 62.4 $\pm$ 1.8 & 90.6 $\pm$ 5.6 & 84.8 $\pm$ 13.1 & 51.5 $\pm$ 8.1 \\
    TRACER (Raw) & \textcolor{blue}{\textbf{62.7 $\pm$ 8.2}} & \textcolor{blue}{\textbf{92.8 $\pm$ 2.5}} & \textcolor{blue}{\textbf{85.7 $\pm$ 1.4}} & 49.8 $\pm$ 5.3 \\
    TRACER (New) & - & - & - & \textcolor{blue}{\textbf{53.8 $\pm$ 13.5}} \\
    \midrule
    & Adversarial Observation & Adversarial Action & Adversarial Reward & Adversarial Dynamics \\
    \midrule
    RIQL & 50.8 $\pm$ 7.6 & 63.6 $\pm$ 7.3 & 65.8 $\pm$ 9.8 & \textcolor{blue}{\textbf{65.7 $\pm$ 21.1}} \\
    TRACER (Raw) & \textcolor{blue}{\textbf{64.5 $\pm$ 3.7}} & \textcolor{blue}{\textbf{67.2 $\pm$ 3.8}} & 64.3 $\pm$ 1.5 & 61.1 $\pm$ 6.2 \\
    TRACER (New) & - & - & \textcolor{blue}{\textbf{71.7 $\pm$ 5.3}} & - \\
    \bottomrule
    \end{tabular}%
  }
\end{table}

\begin{table}[t]
  \centering
  \caption{The average scores and standard errors under random simultaneous corruptions.}
  \label{table_G5}
  \resizebox{1\textwidth}{!}{%
    \renewcommand{\arraystretch}{0.5}
    \begin{tabular}{c|cccc}
    \toprule
    & AntMaze-Medium-Play-v2 & AntMaze-Medium-Diverse-v2 & Walker2d-Medium-Expert-v2 & Hopper-Medium-Expert-v2 \\
    \midrule
    IQL & 0.0 $\pm$ 0.0 & 0.0 $\pm$ 0.0 & 20.6 $\pm$ 3.4 & 1.4 $\pm$ 0.3 \\
    RIQL & 0.0 $\pm$ 0.0 & 0.0 $\pm$ 0.0 & 23.6 $\pm$ 4.3 & 3.9 $\pm$ 1.8 \\
    TRACER & \textcolor{blue}{\textbf{7.5 $\pm$ 3.7}} & \textcolor{blue}{\textbf{6.6 $\pm$ 1.4}} & \textcolor{blue}{\textbf{47.0 $\pm$ 6.6}} & \textcolor{blue}{\textbf{55.5 $\pm$ 2.9}} \\
    \bottomrule
    \end{tabular}%
  }
\end{table}

\subsubsection{Additional Experiments}

We further conduct experiments on two AntMaze datasets and two additional Mujoco datasets, presenting results under random simultaneous corruptions in Table~\ref{table_G5}.

Each result represents the mean and standard error over four random seeds and 100 episodes in clean environments. For each experiment, the methods train agents using batch sizes of 64 for 3000 epochs. Building upon RIQL, we apply the experiment settings as follows.
(1) For the two Mujoco datasets, we use a corruption rate of $c=0.3$ and scale of $\epsilon=1.0$. Note that simultaneous corruptions with $c=0.3$ implies that approximately $76.0\%$ of the data is corrupted.
(2) For the two AntMaze datasets, we use the corruption rate of 0.2, corruption scales for observation (0.3), action (1.0), reward (30.0), and dynamics (0.3).

The results in Table~\ref{table_G5} show that TRACER significantly outperforms other methods in \textbf{all these tasks} with the aforementioned AntMaze and Mujoco datasets.

\subsection{Details for Evaluation and Ablation Studies}
\label{app-ablate}

\subsubsection{Evaluation for Robustness of \Method{} across Different Scales of Corrupted Data}\label{app-eval-scale}

Theorem~\ref{theorem-value-bound} shows that the higher the scale of corrupted data, the greater the difference in action-value distributions and the lower the TRACER's performance.
Thus, we further conduct experiments to evaluate TRACER across various corruption levels. 
Specifically, we apply different corruption rate $c$ in all four elements of the offline dataset. We randomly select $c\%$ of transitions from the dataset and corrupt one element in each selected transition. Then, we repeat this step four times until all elements are corrupted. Therefore, approximately $100 \cdot (1 - (1 - c)^4) \%$ of data in the offline dataset is corrupted.

In Table~\ref{table_G4}, we evaluate \Method{} using different $c \%$, including $10\%, 20\%, 30\%, 40\%,$ and $50\%$. These rates correspond to approximately $34.4\%, 59.0\%, 76.0\%, 87.0\%$, and $93.8\%$ of the data being corrupted.
The results in Table~\ref{table_G4} demonstrate that while TRACER is robust to simultaneous corruptions, its scores depend on the extent of corrupted data it encounters.

\begin{wrapfigure}{r}{0.45\textwidth}
  % \vskip 0.15in
  \centering
  \includegraphics[width=0.45\textwidth]{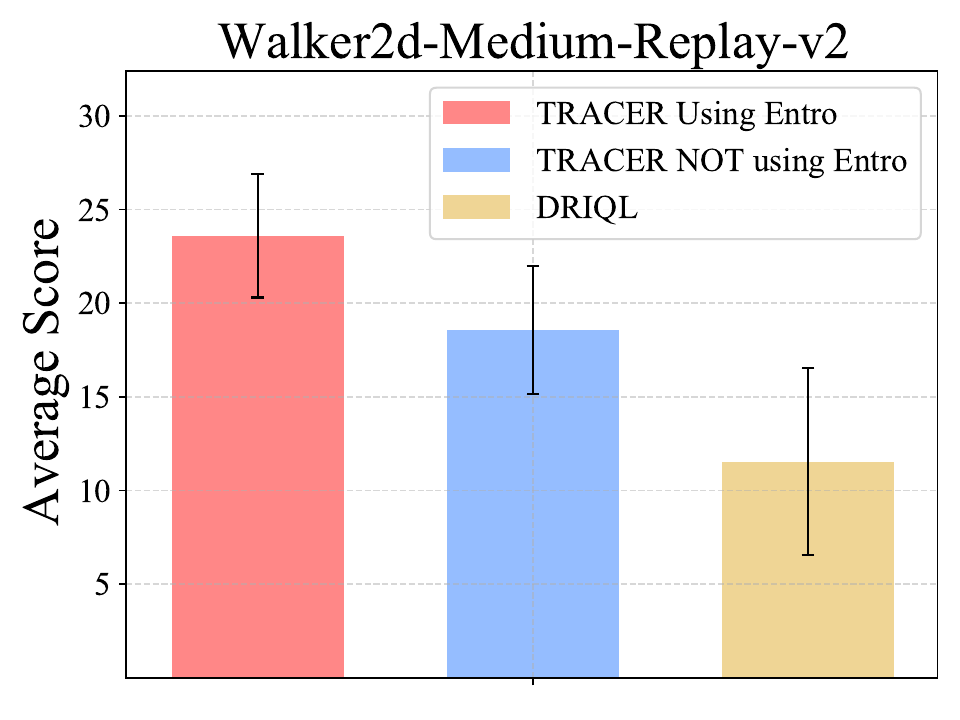}
  \vskip -0.10in
  \caption{Results and standard errors under random simultaneous corruptions.}
  \label{fig-ablate}
  \vskip -0.2in
\end{wrapfigure}

\subsubsection{Details for Evaluation of the Entropy-based Uncertainty Measure}
\label{app-eval}
% We provide the detailed setting of Section~\ref{exp-eval} as follows.

% We evaluate the accuracy every 50 epochs over 3000 epochs. For each evaluation, we sample 500 batches to compute the average entropy of corrupted and clean data. Each batch consists of 32 clean and 32 corrupted data. We illustrate the curves over three seeds in Figure~\ref{trace_entropy}, where each point shows how many of the 500 batches have higher entropy for corrupted data than that of clean data.

% It is worth noting that the settings of this part are the same as the main results in Section~\ref{exp-main}, using the settings presented in Appendix~\ref{app-exp}. 
% We provide the hyperparameters of \Method{} in Tables~\ref{table_hyper_random} and \ref{table_hyper_adver}.

In Figure~\ref{trace_entropy-value}, we additionally report the entropy values of \Method{} with and without using entropy-based uncertainty measure, corresponding to Figure~\ref{trace_entropy}.
The curves illustrate that \Method{} using entropy-based uncertainty measure can effectively regulate the loss associated with corrupted data, reducing the influence of corrupted samples. Therefore, the estimated entropy values of corrupted data can be higher than those of clean data.

\begin{table}[t]
  \centering
  \caption{Results in Hopper-medium-replay-v2 under various random simultaneous corruption levels.}
  \label{table_G4}
  \resizebox{1.0\textwidth}{!}{%
    \renewcommand{\arraystretch}{0.5}
    \begin{tabular}{c|ccccc}
    \toprule
    Corrupt rate $c$ & 0.1 & 0.2 & 0.3 & 0.4 & 0.5 \\
    % \midrule
    $\text{corrupted data} / \text{all data}$ & $\approx 34.4\%$ & $\approx 59.0\%$ & $\approx 76.0\%$ & $\approx 87.0\%$ & $\approx 93.8\%$\\ 
    \midrule
    RIQL & 43.9 $\pm$ 10.3 & 55.9 $\pm$ 5.7 & 22.5 $\pm$ 10.0 &  21.6 $\pm$ 6.2  & 15.6 $\pm$ 1.8 \\
    TRACER (Ours) & \textcolor{blue}{\textbf{79.6 $\pm$ 3.5}} & \textcolor{blue}{\textbf{64.0 $\pm$ 3.0}} & \textcolor{blue}{\textbf{28.8 $\pm$ 7.1}} & \textcolor{blue}{\textbf{25.9 $\pm$ 2.2}} & \textcolor{blue}{\textbf{19.7 $\pm$ 1.2}} \\
    \bottomrule
    \end{tabular}%
    }
\end{table}

\subsubsection{Ablation Study for Bayesian Inference}
We conduct experiments of \Method{} and \Method{} without Bayesian inference, i.e., RIQL combined with distributional RL methods, namely DRIQL. The experiments are on 'Walker2d-medium-replay-v2' under random simultaneous data corruptions. We employ the same hyperparameters of Huber loss for these methods, using $\alpha = 0.25$ and $\kappa = 1.0$.

We report each mean result with the standard deviation in Figure~\ref{fig-ablate}. Each result is averaged over four seeds for 3000 epochs.
These results show the effectiveness of our proposed Bayesian inference.

\section{Compute Resource}
\label{app-compute}

In this subsection, we provide the computational cost of our approach \Method{}.% and the details for 

For the MuJoCo tasks, including Halfcheetah, Walker2d, and Hopper, the average training duration is 40.6 hours. For the CARLA task, training extends to 51 hours. We conduct all experiments on NVIDIA GeForce RTX 3090 GPUs.

To compare the computational cost, we report the average epoch time on Hopper in Table~\ref{table_G1}, where results of baselines (including DT~\cite{dt}) are from \cite{riql}. The formula for computational cost is:
$$
\frac{\text{avg\_epoch\_time\_of\_RIQL\_in\_\cite{riql}}}{\text{avg\_epoch\_time\_we\_run\_RIQL}} \times \text{avg\_epoch\_time\_we\_run\_TRACER}.
$$
Note that TRACER requires a long epoch time due to two main reasons:
\begin{enumerate}
    \item Unlike RIQL and IQL, which learn one-dimensional action-value functions, TRACER generates multiple samples for the estimation of action-value distributions. Following \cite{iqn}, we generate 32 samples of action values for each state-action pair.
    \item TRACER uses states, actions, and rewards as observations to update models, estimating the posterior of action-value distributions.
\end{enumerate}
% In the future work, we plan to improve TRACER’s computational efficiency by optimizing codes to estimate the posterior in parallel using various observations.

\begin{table}[H]
  \centering
  \caption{Average epoch time.}
  \label{table_G1}
  \resizebox{1\textwidth}{!}{%
    \renewcommand{\arraystretch}{0.5}
    \begin{tabular}{c|cccccccc}
    \toprule
    Algorithm & BC & DT & EDAC & MSG & CQL & IQL & RIQL & TRACER (Ours) \\
    \midrule
    Times (s) & 3.8 & 28.9 & 14.1 & 12.0 & 22.8 & 8.7 & 9.2 & 19.4 \\
    \bottomrule
    \end{tabular}%
  }
\end{table}

\section{More Related Work}
\textbf{Online RL.} In general, standard online RL fall into two categories: model-free RL \cite{sac,tqc,raeb} and model-based RL \cite{mbpo,Wang00LL22}. In recent years, RL has achieved great success in many important real-world
decision-making tasks \cite{mankowitz2023faster, gamble2018safety, yu2021reinforcement, wang2024ai4mul, wang2023learning, wangHEM, wang2024benchmarking}. However, the online RL methods still typically rely on active data collection to succeed, hindering their application in scenarios where real-time data collection is expensive, risky, and/or impractical. Thus, we focus on the offline RL paradigm in this paper. 

\section{Code}
\label{app-code}
We implement our codes in Python version 3.8 and make the code available online~\footnote{\href{https://github.com/MIRALab-USTC/RL-TRACER}{https://github.com/MIRALab-USTC/RL-TRACER}}.

\section{Limitations and Negative Societal Impacts}
\label{app-limits}
\Method{'s} performance is related to the extent of corrupted data within the offline dataset. Although \Method{} consistently outperforms RIQL even under conditions of extensive corrupted data (see Table~\ref{table_G4}), its performance does degrade as the corruption rate (i.e., the extent/scale of corrupted data) increases. To tackle this problem, we look forward to the continued development and optimization of corruption-robust offline RL with large language models, introducing the prior knowledge of clean data against large-scale or near-total corrupted data. Thus, this corruption-robust offline RL can perform well even when faced with large-scale or near-total corrupted data in the offline dataset.

This paper proposes a novel approach called \Method{} to advance the field of robustness in offline RL for diverse data corruptions, enhancing the potential of agents in real-world applications. Although our primary focus is on technical innovation, we recognize the potential societal consequences of our work, as robustness in offline RL for data corruptions has the potential to influence various domains such as autonomous driving and the large language model field. We are committed to ethical research practices and attach great importance to the social implications and ethical considerations in the development of robustness research in offline RL.

\end{document}